\def\eqref#1{equation~\ref{#1}}
\def\1{\bm{1}}
\DeclareMathAlphabet{\mathsfit}{\encodingdefault}{\sfdefault}{m}{sl}
\SetMathAlphabet{\mathsfit}{bold}{\encodingdefault}{\sfdefault}{bx}{n}
\newtheorem{theorem}{Theorem}[section]
\newtheorem{lemma}[theorem]{Lemma}
\newtheorem{proposition}[theorem]{Proposition}
\newtheorem{remark}[theorem]{Remark}
\newtheorem{assumption}[theorem]{Assumption}
\title{Convergence Analysis of Homotopy-SGD for non-convex optimization}
\author{Matilde Gargiani$^1$, Andrea Zanelli$^{2}$, Quoc Tran-Dinh$^3$, Moritz Diehl$^{2,4}$, Frank Hutter$^{1,5}$\\ 
$^1$Department of Computer Science, University of Freiburg\\
\texttt{\{gargiani, fh\}@cs.uni-freiburg.de} \\
$^2$Department of Microsystems Engineering (IMTEK), University of Freiburg\\
\texttt{\{andrea.zanelli, moritz.diehl\}@imtek.uni-freiburg.de} \\
$^3$Department of Statistics and Operations Research, University of North Carolina\\
\texttt{quoctd@email.unc.edu} \\
$^4$Department of Mathematics, University of Freiburg\\
$^5$Bosch Center for Artificial Intelligence
}
\begin{document}

\maketitle

\begin{abstract}
First-order stochastic methods for solving large-scale non-convex optimization problems are widely used in many big-data applications, e.g. training deep neural networks as well as other complex and potentially non-convex machine learning models.
	Their inexpensive iterations generally come together with slow global convergence rate (mostly sublinear), leading to the necessity of carrying out a very high number of iterations before the iterates reach a neighborhood of a minimizer.  
  In this work, we present a first-order stochastic algorithm based on a combination of homotopy methods and SGD, called Homotopy-Stochastic Gradient Descent (H-SGD), which finds interesting connections with some proposed heuristics in the literature, e.g. optimization by Gaussian continuation, training by diffusion, mollifying networks. 
  Under some mild assumptions on the problem structure, we conduct a theoretical analysis of the proposed algorithm. Our analysis shows that, with a specifically designed scheme for the homotopy parameter, H-SGD enjoys a global linear rate of convergence to a neighborhood of a minimum while maintaining fast and inexpensive iterations. 
  Experimental evaluations 
  confirm the theoretical results and show that H-SGD can outperform standard SGD. 
  \end{abstract}

\section{Introduction}
This paper focuses on the theoretical development and analysis of a stochastic optimization algorithm, called Homotopy-Stochastic Gradient Descent (H-SGD), based on the combination of homotopy methods and stochastic gradient descent (SGD). The algorithm we propose is specifically designed to solve finite-sum problems of the following form
\begin{equation}\label{eq: main_problem}
w^* \in \arg\min_{w\in\mathbb{R}^d} \,\left\{f(w)\coloneqq \frac{1}{N}\sum_{j=1}^N\, f_j(w)\right\},\,
\end{equation} 
where $f:\mathbb{R}^d\rightarrow \mathbb{R}$ is continuously differentiable, bounded below and not necessarily convex.
In particular, we assume that we only have access to noisy function values and gradients of the objective function in~\eqref{eq: main_problem} via a stochastic first-order oracle, as in~\citep{10.1137/070704277} and~\citep{Ghadimi2013StochasticFA}. Problems of this form typically arise in machine learning and deep learning applications, where the dimensionality of the datasets makes the full function and gradient evaluations too expensive. This class of problems is generally approximately solved by stochastic first-order iterative algorithms, e.g. SGD~\citep{a4bb83da67734d249d9fc61e84d4eb2c}, Adagrad~\citep{JMLR:v12/duchi11a}, Adam~\citep{DBLP:journals/corr/KingmaB14}. 
At the iteration $t$, the algorithms of this class acquire a stochastic estimate of the function value $f(w_t,\,\xi_t)$ and the gradient $g(w_t,\,\xi_t)$ by calling the oracle with input $w_t$, where $\xi_t$ is a random variable, i.e. when the noise comes from subsampling as in the mini-batch scenarios, then $\xi_t\in \left\{0,1 \right\}^N$ with ${\Vert \xi_t \Vert}_1 =M$ and $g(w_t,\,\xi_t)= \frac{1}{M}\sum_{j=1}^{N}\xi_{t,j}\cdot\nabla f_j(w_t)$. 
In the case of SGD, for a given $w_0\in\mathbb{R}^d$ and $\alpha>0$, the iterates are generated as follows
\begin{equation}\label{eq:main_sgd_iter}
w_{t+1} \coloneqq w_{t} - \alpha g(w_t,\,\xi_t)\,.
\end{equation}
Consequently, the iterate $w_{t+1}=w_{t+1}(\xi_{[t]})$ is a function of the history $\xi_{[t]}\coloneqq \left(\xi_0,\,\dots,\,\xi_{t}\right)$ (also $w_0$ should be included in case it is a random initial point) of the generated random process and hence is itself random. 

In general, stochastic first-order methods enjoy fast convergence when the problem is characterized by a certain structure. In particular, when the Polyak-Łojasiewicz (PL) condition~\citep[see][for more details on the PL condition]{10.1007/978-3-319-46128-1_50} holds for the objective function in Problem~\ref{eq: main_problem}, then, with a ``small enough'' value for the step-size, the SGD iterates converge linearly to a minimizer's neighborhood~\citep{10.1007/978-3-319-46128-1_50, pmlr-v89-vaswani19a}. Unfortunately, in many machine learning applications, the PL condition is not a realistic assumption as the landscape is generally characterized by the presence of multiple local minima and saddle points~\citep{NIPS2014_5486, NIPS2016_6112, PL_lecture}. At the same time, in the vicinity of the minimizers the problems generally show stronger structures, i.e. PL or even strong convexity, allowing for a faster local convergence~\citep{PL_lecture}. In such a scenario, a smart initialization hence becomes crucial for the numerical performance of the method~\citep{pmlr-v28-sutskever13}. Unfortunately, the power of the existing smart initialization heuristics is often quite limited given the small knowledge of the problem's landscape which we generally dispose of. In addition, these heuristics typically can not guarantee that the SGD iterates start ``close enough'' to a minimizer, i.e. in the region where the PL condition holds, such that the method enjoys a linear rate of convergence. Therefore, the ideal scenario would be to be able to exploit the stronger local structure while the method's iterates gradually approach a minimizer and independently from the starting point. 
In this regard, homotopy methods are a general strategy for tackling difficult optimization problems by gradually transforming a simplified version of a target problem, or a version with a known minimizer, back to its original form while following a solution along the way. Consequently, they preserve in each step the vicinity to a minimizer of the currently tackled problem, allowing the solver to always work in regions where the problems exhibit stronger structures.
In general terms, homotopy methods~\citep{10.5555/945750} are a widely and successfully used mathematical tool to efficiently solve various problems in numerical analysis, e.g.~\citep{10.5555/2103574},~\citep{HAM_liao}. 
Such methods are also suitable to solve complex non-convex optimization problems where no or only little prior knowledge regarding the localization of the solutions is available, allowing for the exploitation of the stronger local structures of the problems in order to achieve fast global convergence, e.g.~\citep{H_proximal_L1, pmlr-v32-lin14, pmlr-v32-suzumura14, Gargiani2020TransferringOA}.

In this work, we propose a stochastic first-order numerical method to solve Problem~\ref{eq: main_problem}, called Homotopy-Stochastic Gradient Descent (H-SGD), which is based on the combination of the homotopy method and SGD. After introducing the method and discussing the related work (Section~\ref{sec:method_related_works}), our contributions are as follows
\begin{enumerate}
\item In Section~\ref{sec:theoretical_analysis}, we provide a general theoretical analysis of H-SGD under some mild assumptions, showing that, if the increments in the homotopy parameter are ``small enough'', the proposed method tracks in expectation an $r$-optimal solution across homotopy iterations. We then show that, in the same setting, H-SGD can achieve a global linear rate of convergence to a minimizer's neighborhood when used in combination with a specific schedule for the homotopy parameter, i.e. $\Delta\lambda_i$ decreases exponentially across homotopy iterations.
\item In Section~\ref{sec:experiments}, we empirically evaluate the performance of H-SGD. Our experiments not only confirm the theoretical results derived in Section~\ref{sec:theoretical_analysis} but also show that H-SGD with a smartly designed homotopy map can outperform SGD.
\end{enumerate}

\section{Homotopy-SGD}\label{sec:method_related_works}

Homotopy-Stochastic Gradient Descent (H-SGD) is based on the combination of the homotopy method and SGD, in the hope of combining the best of both worlds.  
In particular, the goal is that of maintaining the advantageous properties of SGD, such as its cheap iterations and fast local convergence under PL condition, while maximally exploiting the stronger local structures via the homotopy scheme. Therefore, H-SGD relies on the definition of a homotopy map $f(w,\,\lambda):\mathbb{R}^d\times [0,1]\rightarrow \mathbb{R}$, such that, when $\lambda=0$ we recover a well-behaved function, e.g. convex, or a function with a known minimizer's localization, and by increasing the $\lambda$ parameter, also called homotopy parameter, we gradually
morph it in order to finally end up with our target objective function $f(w,\,1)=f(w)$~\citep[see][for more details on homotopy functions]{suciu_lecture}. By using such a homotopy map, H-SGD finds an approximate solution of Problem~\ref{eq: main_problem} by approximately solving a series of parametric problems that gradually leads to the target one. In particular, in each homotopy iteration $i$, H-SGD tackles a parametric problem of the form
\begin{equation}
w^*_{i}\in \arg\min_{w\in\mathbb{R}^d} \, f(w,\,\lambda_i)\,,
\end{equation}
where the homotopy parameter $\lambda_i$ is slightly increased at each homotopy iteration.
  As confirmed by our theoretical analysis, if the variations of the homotopy parameter, i.e. $\Delta\lambda_{i}$, are ``small enough'' across homotopy iterations, the method is able to track in expectation an $r$-optimal solution from source to target problem. As shown in Algorithm~\ref{alg:hsgd}, H-SGD takes as input an approximate solution for the problem associated with $f(w,\,0)$, i.e. $w_0$, and is then characterized by two loops: in the outer loop (homotopy iterations) the method defines a new objective function by increasing the homotopy parameter (line 5 in Algorithm~\ref{alg:hsgd}), and in the inner loop (warm-started SGD iterations) the current homotopy problem is approximately solved with $k$ iterations of SGD starting from the previously derived approximate solution, i.e. $w_{i-1}$ (line 6 in Algorithm~\ref{alg:hsgd}).
Different functions $h:\mathbb{N}\rightarrow (0,1]$ to determine the increment $\Delta\lambda_i$ in the homotopy parameter at each homotopy iteration can be used. As shown in our analysis, this function greatly impacts on the method's properties and convergence rate. 
In particular, our theoretical analysis confirms that, when a specifically designed scheme for $\Delta \lambda_i$ is deployed, i.e. exponentially decreasing schedule, H-SGD is effective in guaranteeing a global linear rate of convergence to a neighborhood of a minimizer of our target problem, while, given the same setting, vanilla SGD can only ensure a global sublinear rate of convergence.

\begin{algorithm}
\caption{Homotopy-Stochastic Gradient Descent (H-SGD)}\label{alg:hsgd}
\begin{algorithmic}[1]
\State \textbf{input:} ${w}_0\in\mathbb{R}^d $, $n\in \mathbb{N},\,k\in \mathbb{N}$, $h:\mathbb{N}\rightarrow (0,1]$ with $\sum_{i=1}^{n}h(i)=1$ and $\alpha>0$ 
\State \textbf{initialization:} $i=0,\,\lambda_{0} = 0$
\For{$i=1,\dots, n$}
\State $\Delta\lambda_{i} \gets h(i)$
\State $\lambda_{i}\gets \lambda_{i-1}+\Delta\lambda_{i}$
\State {$w_{i}\gets$SGD}{$\left(w_{i-1}, \alpha, k, f(\cdot,\lambda_{i})\right)$}
\EndFor
\State \textbf{output:} $w_{i}$
\end{algorithmic}
\end{algorithm}

\subsection{Related Work}
Finding a solution of Problem~\ref{eq: main_problem} when the objective function is non-convex is often quite challenging. 
Different heuristics hence have been proposed to speed up and improve the optimization of such problems, many of which to be used in combination with stochastic first-order methods such as SGD.
In this regard, the proposed method, despite being new in its general formulation and analysis, finds many interesting similarities and connections with existing heuristics in the machine learning literature, e.g.~\citep{10.1145/1553374.1553380, abs-1207-0580, 10.5555/3042817.3043064}. 
We now briefly discuss some of the state-of-the-art optimization techniques and initialization strategies for solving Problem~\ref{eq: main_problem} that are most related to H-SGD, drawing connections with existing and ongoing research works and in the hope that our analysis can also lead to a new interpretation of some widely used techniques which so far lack a more rigorous theoretical description and analysis. 

\noindent{}\textbf{Graduated Optimization.}
The graduated optimization approach~\citep{10.5555/30394}, also known as \textit{coarse-to-grained optimization method}, is a general heuristic to solve complex non-convex problems that relies on the basic principles of the homotopy method. As the name suggests, at first a coarse-grained and ``easy-to-solve'' version of the target problem is generated via a smoothing operation. The method then proceeds by gradually refining the problem versions, using the previous solution as initial point. Graduated optimization has been utilized explicitly and implicitly as heuristic in many machine learning and computer vision applications, e.g. object localization~\citep{seeing_blur}, manifold learning~\citep{nonlinear_dim_reduction}, optical flow~\citep{Bro11a}. Unfortunately, many of these techniques have practical and/or theoretical gaps, as they generally lack a rigorous running time and convergence analysis, and/or, as in~\citep{DBLP:conf/emmcvpr/MobahiF14} and~\citep{DBLP:conf/icml/HazanLS16}, they rely on an expensive method, i.e. Gaussian smoothing, to construct coarse-grained versions of the original target problem.
Regarding theoretical contributions on graduated optimization methods for solving Problem~\ref{eq: main_problem},~\citet{DBLP:conf/icml/HazanLS16} are the first and only, to the best of our knowledge, to provide a theoretical analysis for the running time and convergence rate of a graduated optimization method based on an approximate, yet still expensive, type of Gaussian smoothing and SGD. Unfortunately, their analysis shows two major limitations. First, it relies on their Gaussian smoothing approximation as homotopy map, which limits the generality of the conducted analysis, while our analysis is independent from the specific formulation of the homotopy map used. Second, the analysis is based on the assumption of local strong convexity, which is a quite strong requirement and hence might lead to considerably smaller local regions than those considered in our analysis~\citep{PL_lecture}. 
To conclude this short overview on graduated optimization, many successful optimization heuristics proposed in the machine learning literature are implicitly related to graduated optimization and, consequently, to homotopy methods, such as curriculum learning~\citep{10.1145/1553374.1553380}, simulated annealing~\citep{Kirkpatrick671}, noise injection techniques~\citep{abs-1207-0580}, smart initialization~\citep{10.5555/3042817.3043064} and layer-wise pretraining~\citep{10.5555/2976456.2976476}. 

\noindent{}\textbf{Transfer Learning.} 
Due to the massive amount of computational resources required by the development of modern machine learning applications, the community has started to explore the possibility of re-using learned parameters across different tasks, leading to the development of many new transfer-learning algorithms, e.g.~\citep{Torrey_transferlearning, 10.1109/TKDE.2009.191, NIPS2014_5347, Gargiani2020TransferringOA}. A simple yet often effective way to transfer knowledge across different tasks consists in using warm-start initialization. In this perspective, transfer-learning boils down to a sort of smart initialization heuristic. A first connection between homotopy methods and transfer-learning was underlined in~\citep{Gargiani2020TransferringOA}. The authors propose a transfer-learning algorithm based on the homotopy method and SGD via the definition of a homotopy map that transforms a source task into a target task. The method comes together with a general theoretical analysis that is independent from the specific homotopy map adopted and shows that, under some assumptions, the algorithm can track in expectation an approximate solution from source to target task, i.e. optimality-tracking. Unfortunately, the method's analysis is limited as it only considers constant increments of the homotopy parameter, which automatically degrades the linear rate of the local solver to a sublinear one for the homotopy-based method. In addition, as in~\citep{DBLP:conf/icml/HazanLS16}, the analysis relies on the assumption of local strong convexity, which might hold in a significantly smaller neighborhood of the minimizers than the PL condition~\citep{PL_lecture}.

\section{Theoretical Analysis}\label{sec:theoretical_analysis}
In this section, we provide a general theoretical analysis of H-SGD as described in Algorithm~\ref{alg:hsgd}. 
In particular, after discussing the required underlying assumptions (Section~\ref{sec:assumptions}), and the fundamental theoretical preliminaries (Section~\ref{sec:th_preliminaries}), first we analyze the optimality tracking properties of the proposed method (Section~\ref{sec:opt_tracking}), and then we show that, with a specifically designed scheme for the homotopy parameter, H-SGD enjoys linear convergence to a minimizer's neighborhood (Section~\ref{sec:lin_conv}). 
The analysis we conduct is independent from the specific type of homotopy map adopted and it applies to any scenario where the assumptions listed in Section~\ref{sec:assumptions} hold.

Recall that the proposed method is based on sequentially and approximately solving a series of $n$ unconstrained parametric problems of the form
\begin{equation}
\arg\min_{w\in\mathbb{R}^d} f(w,\,\lambda_i)\,,\quad \forall i=1,\dots,n\,,
\end{equation}
where $\lambda_{i}< \lambda_{i+1}$, $\lambda_{n}=1$, $\lambda_i \in (0,1]$. 
In addition, H-SGD relies on the availability of an approximate solution $w_0$ for the source problem with $\lambda_0 = 0$ as starting point. 
We use $w_i$ to denote the derived approximate solution for the problem associated with parameter $\lambda_i$ that is obtained by applying $k>0$ iterations of SGD starting from the previously derived approximate solution for the problem with parameter $\lambda_{i-1}$, $\forall i=1,\dots, n$. 

Notice that $w_{i-1,t}=w_{i-1,t}(\xi_{[i-1,t-1]})$ for $t=1,\dots,k$ with $w_{i-1,k} = w_{i}(\xi_{[i]})$ is used to refer to the random vector generated at the $i$-th homotopy iteration after $t$ iterations of SGD, where $\xi_{[i-1,t-1]}=(w_0,\xi^1_0,\dots,\xi^1_{k-1},\dots,\xi^{i-1}_{0},\dots,\xi^{i-1}_{k-1},\xi^{i}_{0},\dots,\xi^{i}_{t-1})$ and $\xi_{[i]}=(w_0,\xi^1_0,\dots,\xi^1_{k-1},\dots,\xi^{i-1}_{0},\dots,\xi^{i-1}_{k-1},\xi^{i}_{0},\dots,\xi^{i}_{k-1})$ with $\xi_{[0]}=w_0$ are used to refer to the collection of all random sources up to the current iteration. We use $U^*(\lambda)$ to denote the set of local (and global) minimizers of the parametric Problem~\ref{eq: main_problem}.

\subsection{Assumptions}\label{sec:assumptions}
We now list and discuss the assumptions that we consider throughout our analysis. Together with the standard smoothness and bounded variance assumptions, we also introduce three regularity assumptions, which describe the localization of the solution map and how the objective function $f$ changes by varying the homotopy parameter across iterations. In addition to these assumptions, we also consider a more general and local version of the standard PL condition.
 Consequently, unlike the settings considered in~\citep{10.1007/978-3-319-46128-1_50} and~\citep{pmlr-v89-vaswani19a}, where the standard PL condition is unrealistically required to hold globally, ours is often encountered in many different non-convex scenarios~\citep[see][for more details]{PL_lecture}. 

\begin{assumption}[existence of a regular localization of the solution map]\label{ass: regularity_0}
Assume there exists a set $\Omega\subseteq \mathbb{R}^d\times [0,1]^z$ such that $W^*(\lambda)\coloneqq\Omega \cap U^*(\lambda)$ and $\Sigma\coloneqq \left\{ (y, \lambda)\in \mathbb{R}^d \times [0,1]^z\,|\, y\in W^*(\lambda)\right\}$ are both non-empty and connected. Moreover, we assume that for a given $\lambda$ all the points in $W^*(\lambda)$ are associated with the same objective function value, which we denote as $f^*(\lambda) \coloneqq f(y, \lambda)$ for all $y\in W^*(\lambda)$.
\end{assumption}
Notice that Assumption~\ref{ass: regularity_0} does not imply vector-valued solutions of the parametric Problem~\ref{eq: main_problem}.

\begin{assumption}[regularity 1]\label{ass: regularity_1}
Assume there exists $\delta>0$ such that
\begin{equation}\label{eq:reg_1}
\vert f(w, \tilde{\lambda}) - f(w, \hat{\lambda}) \vert \leq \delta \Vert \tilde{\lambda} - \hat{\lambda} \Vert,\,\quad \forall w\in \mathbb{R}^d,\,\forall \tilde{\lambda},\,\hat{\lambda}\in [0,1]^z\,.
\end{equation}
\end{assumption}

\begin{assumption}[regularity 2]\label{ass: regularity_2}
Assume there exists $\gamma>0$ such that
\begin{equation}\label{eq:reg_2}
\vert f^*(\tilde{\lambda}) - f^*(\hat{\lambda}) \vert \leq \gamma \Vert \tilde{\lambda} - \hat{\lambda} \Vert,\,\quad\forall  \tilde{\lambda},\,\hat{\lambda}\in [0,1]^z\,.
\end{equation}
\end{assumption}

\begin{assumption}[$L$-smoothness]\label{ass: L_smoothness}
Assume there exists $L>0$ such that
\begin{equation}
\Vert \nabla_{w} f(\tilde{w}, \lambda) -  \nabla_{w} f(\hat{w}, \lambda)  \Vert \leq L\,\Vert \tilde{w} - \hat{w} \Vert,\,\quad \forall \tilde{w}, \hat{w}\in \mathbb{R}^d,\,\forall {\lambda}\in [0,1]^z\,.
\end{equation}
\end{assumption}

See Remark~\ref{app:remark} in Section~\ref{sec:additional_remarks} of the Appendix for more details on Assumptions~\ref{ass: regularity_1} and~\ref{ass: regularity_2}.


\begin{assumption}[bounded ``variance'']\label{ass: bounded_Var}
Consider $f(w,\,\lambda)$ with $\lambda \in [0,1]^z$ and let $g(w,\,\xi,\,\lambda)$ be the stochastic estimate of the true gradient $\nabla_{w} f(w,\,\lambda)$ used in SGD with noise $\xi$. 
Assume that 
\begin{equation}
\mathrm{E}_{\xi}\left[ g(w,\,\xi,\, \lambda)\right] = \nabla_{w} f(w,\,\lambda),\,\quad \forall w \in \mathbb{R}^d,\,\forall \lambda \in [0,1]^z.
\end{equation}
and that there exists $\sigma^2\geq 0$ such that
\begin{equation}
\mathrm{E}_{\xi}\left[\Vert g(w,\,\xi,\, \lambda)-\nabla_{w} f(w,\,\lambda)\Vert^2 \right]\leq \sigma^2,\,\quad \forall w \in \mathbb{R}^d,\,\forall \lambda \in [0,1]^z.
\end{equation}
\end{assumption}

\begin{assumption}[``expected'' PL condition]\label{ass: PL_condition}
Consider $f(w,\,\lambda_i)$ with $\lambda_i \in [0,1]^z$ and let $w_{i-1,t}=w_{i-1,t}(\xi_{[i-1,t-1]})$ denote the iterate that is obtained at the $i$-th homotopy iteration after $t$ iterations of SGD with $t\leq k$.
Assume that there exist $B> \frac{\sigma^2}{2\mu}$ and $\mu>0$ such that, if $\mathrm{E}_{\xi_{[i-1,t-1]}} \left[ f(w_{i-1,t}, \lambda_i) \right] -f^*(\lambda_i) \leq B $, then
\begin{equation}
\mathrm{E}_{\xi_{[i-1,t-1]}}\left[ \Vert \nabla_w f(w_{i-1,t},\,\lambda_i) \Vert^2 \right] \geq 2\mu\cdot \left[ \mathrm{E}_{\xi_{[i-1,t-1]}}\left[  f(w_{i-1,t}, \lambda_i) \right] -f^*(\lambda_i)\right]\,.
\end{equation}
\end{assumption}

See Remark~\ref{remark_PL} for additional details on Assumption~\ref{ass: PL_condition}.

\subsection{Fundamental Theoretical Preliminaries}\label{sec:th_preliminaries}
Before proceeding with the main theoretical contributions, we revise and adjust the existing results in the literature on global error bounds of SGD, i.e.~\citep{pmlr-v89-vaswani19a}, to also hold in the considered setting. The extended results are then used for the derivations in Section~\ref{sec:opt_tracking} and~\ref{sec:lin_conv}.  
\begin{proposition}\label{prop_1}
Consider $f(w,\,\lambda_i)$ with $\lambda_i \in [0,1]^z$ and let $w_{i-1,t}=w_{i-1,t}(\xi_{[i-1,t-1]})$ denote the iterate obtained at the $i$-th homotopy iteration by applying $t$ iterations of SGD with $t\leq k-1$ and $\alpha\leq \frac{1}{L}$. Under Assumptions~\ref{ass: regularity_0} and~\ref{ass: L_smoothness}-\ref{ass: PL_condition}, if $\mathrm{E}_{\xi_{[i-1,t-1]}} \left[ f(w_{i-1,t}, \lambda_i) \right] -f^*(\lambda_i) \leq B$, then $\mathrm{E}_{\xi_{[i-1,t]}} \left[ f(w_{i-1,t+1}, \lambda_i) \right] -f^*(\lambda_i) \leq B$.
\end{proposition}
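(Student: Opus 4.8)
The plan is to show that the sublevel set $\{\,w:\mathrm{E}[f(w,\lambda_i)]-f^*(\lambda_i)\le B\,\}$ is forward invariant under a single SGD step, by reducing one iteration to an affine recursion in the expected suboptimality and checking that its stationary level lies below $B$. Throughout I abbreviate $F_t\coloneqq \mathrm{E}_{\xi_{[i-1,t-1]}}[f(w_{i-1,t},\lambda_i)]-f^*(\lambda_i)$, so that the hypothesis reads $F_t\le B$ and the target is $F_{t+1}\le B$; here $f^*(\lambda_i)$ is well defined by Assumption~\ref{ass: regularity_0}.

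First I would apply the descent inequality implied by the $L$-smoothness of Assumption~\ref{ass: L_smoothness} to the update $w_{i-1,t+1}=w_{i-1,t}-\alpha\,g(w_{i-1,t},\xi^i_t,\lambda_i)$, namely
\[
f(w_{i-1,t+1},\lambda_i)\le f(w_{i-1,t},\lambda_i)-\alpha\langle\nabla_w f(w_{i-1,t},\lambda_i),\,g(w_{i-1,t},\xi^i_t,\lambda_i)\rangle+\tfrac{L\alpha^2}{2}\Vert g(w_{i-1,t},\xi^i_t,\lambda_i)\Vert^2.
\]
Taking the conditional expectation over the fresh noise $\xi^i_t$ and using unbiasedness together with the variance bound of Assumption~\ref{ass: bounded_Var} in the form $\mathrm{E}_{\xi^i_t}[\Vert g\Vert^2]\le \Vert\nabla_w f\Vert^2+\sigma^2$, the cross term collapses and I obtain the one-step bound
\[
\mathrm{E}_{\xi^i_t}[f(w_{i-1,t+1},\lambda_i)]\le f(w_{i-1,t},\lambda_i)-\alpha\Big(1-\tfrac{L\alpha}{2}\Big)\Vert\nabla_w f(w_{i-1,t},\lambda_i)\Vert^2+\tfrac{L\alpha^2\sigma^2}{2}.
\]

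Next I would take the outer expectation over the history $\xi_{[i-1,t-1]}$ (tower property), which turns the left-hand side into $\mathrm{E}_{\xi_{[i-1,t]}}[f(w_{i-1,t+1},\lambda_i)]$, and subtract $f^*(\lambda_i)$. Since the hypothesis guarantees $F_t\le B$, the expected PL condition of Assumption~\ref{ass: PL_condition} applies and lower-bounds $\mathrm{E}_{\xi_{[i-1,t-1]}}[\Vert\nabla_w f(w_{i-1,t},\lambda_i)\Vert^2]\ge 2\mu F_t$. Substituting yields the affine recursion
\[
F_{t+1}\le \rho\,F_t+C,\qquad \rho\coloneqq 1-2\mu\alpha\Big(1-\tfrac{L\alpha}{2}\Big),\quad C\coloneqq \tfrac{L\alpha^2\sigma^2}{2}.
\]

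Finally, the step-size restriction $\alpha\le 1/L$ gives $1-\tfrac{L\alpha}{2}\ge \tfrac12$ and $L\alpha\le 1$, from which $\rho\in[0,1)$; a short computation then shows that the stationary level $F^\star\coloneqq C/(1-\rho)=\tfrac{L\alpha\sigma^2}{4\mu(1-L\alpha/2)}$ satisfies $F^\star\le \tfrac{\sigma^2}{2\mu}$, again using $L\alpha\le 1$. Because $\rho\ge 0$ the map $x\mapsto \rho x+C$ is nondecreasing on $[0,B]$, so $F_t\le B$ forces $F_{t+1}\le \rho B+C$, and $\rho B+C\le B$ holds precisely when $B\ge F^\star$. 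The assumption $B>\tfrac{\sigma^2}{2\mu}\ge F^\star$ therefore closes the argument. I expect the real care — more than genuine difficulty — to lie in the expectation bookkeeping, i.e.\ applying the expected-form PL inequality to the outer-expectation quantity $F_t$ rather than pointwise, and in checking that $\alpha\le 1/L$ is exactly what pins the noise floor $F^\star$ below the threshold $\sigma^2/(2\mu)$ featured in the lower bound on $B$.
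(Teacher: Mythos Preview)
Your proof is correct and follows essentially the same approach as the paper: the $L$-smoothness descent lemma combined with the variance bound and the expected PL inequality yields an affine one-step recursion in the expected suboptimality, and the condition $B>\sigma^2/(2\mu)$ ensures the sublevel set is forward invariant. The only cosmetic differences are that the paper first coarsens the gradient coefficient $-\alpha(1-L\alpha/2)$ to $-\alpha/2$ before invoking PL (obtaining the simpler $\rho=1-\alpha\mu$), and then argues via ``the expected objective decreases whenever it exceeds $\sigma^2/(2\mu)$'' rather than your cleaner fixed-point/monotone-map argument; your framing in fact handles the case $F_t<\sigma^2/(2\mu)$ more explicitly than the paper does.
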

\begin{proof}
See Section~\ref{sec:proof_proposition} in the Appendix for a proof.
\end{proof}

\begin{theorem}\label{theorem_SGD}
Consider the minimization of $f(w,\,\lambda_i)$ with $\lambda_i \in [0,1]^z$ via SGD. Let $w_{i-1}=w_{i-1}(\xi_{[i-1]})$ be the random initial point associated with the $i$-th homotopy iteration with $\mathbb{E}_{\xi_{[i-1]}} \left[f(w_{i-1}, \lambda_i) \right] -f^*(\lambda_i) \leq B$ and $w_{i-1,t}=w_{i-1,t}(\xi_{[i-1,t-1]})$ denote the $t$-th SGD iterate with $t\leq k$. Under Assumptions~\ref{ass: regularity_0} and~\ref{ass: L_smoothness}-\ref{ass: PL_condition}, SGD with a constant step-size $\alpha\leq \frac{1}{L}$ attains the following convergence rate to a minimizer's neighborhood
\begin{equation}\label{eq:convergence_SGD}
    \mathrm{E}_{\xi_{[i-1,t-1]}}\left[f(w_{i-1,t},\,\lambda_i)-f^*(\lambda_i) \right]\leq \rho^t \mathrm{E}_{\xi_{[i-1]}}\left[f(w_{i-1},\,\lambda_i)-f^*(\lambda_i) \right] + \frac{\sigma^2}{2\mu}\,, 
\end{equation}
with $\rho\coloneqq\left(1-\alpha\mu \right)$. 
With $\alpha=\frac{1}{L}$, we obtain $\rho=\left(1-\frac{\mu}{L} \right)$.
\end{theorem}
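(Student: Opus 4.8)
The plan is to adapt the standard SGD-under-PL analysis to the present setting, where the Polyak-Łojasiewicz inequality is available only in expectation (Assumption~\ref{ass: PL_condition}) and the iterates carry the homotopy bookkeeping $w_{i-1,t}$. The decisive structural point is that every expectation must be taken \emph{before} the PL inequality is invoked, and that Proposition~\ref{prop_1} supplies the inductive invariant $\mathrm{E}_{\xi_{[i-1,t-1]}}[f(w_{i-1,t},\lambda_i)] - f^*(\lambda_i) \leq B$ for all $t \leq k$, so that Assumption~\ref{ass: PL_condition} remains applicable at every step of the recursion.

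First I would fix the homotopy index $i$ and apply the descent lemma implied by $L$-smoothness (Assumption~\ref{ass: L_smoothness}) to a single SGD step $w_{i-1,t+1} = w_{i-1,t} - \alpha\, g(w_{i-1,t}, \xi^i_t, \lambda_i)$, giving
\begin{equation*}
f(w_{i-1,t+1}, \lambda_i) \leq f(w_{i-1,t}, \lambda_i) - \alpha \langle \nabla_w f(w_{i-1,t}, \lambda_i),\, g(w_{i-1,t}, \xi^i_t, \lambda_i)\rangle + \frac{L\alpha^2}{2}\Vert g(w_{i-1,t}, \xi^i_t, \lambda_i)\Vert^2.
\end{equation*}
Taking the conditional expectation over the fresh noise $\xi^i_t$ and using unbiasedness together with the variance bound of Assumption~\ref{ass: bounded_Var} (via $\mathrm{E}_{\xi^i_t}[\Vert g\Vert^2] \leq \Vert \nabla_w f\Vert^2 + \sigma^2$) yields the one-step inequality
\begin{equation*}
\mathrm{E}_{\xi^i_t}[f(w_{i-1,t+1}, \lambda_i)] \leq f(w_{i-1,t}, \lambda_i) - \alpha\Bigl(1 - \tfrac{L\alpha}{2}\Bigr)\Vert \nabla_w f(w_{i-1,t}, \lambda_i)\Vert^2 + \frac{L\alpha^2\sigma^2}{2}.
\end{equation*}

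Next I would take the full expectation over the history $\xi_{[i-1,t-1]}$, invoke $\alpha \leq 1/L$ to lower-bound the coefficient $1 - L\alpha/2$ by $1/2$, and only then apply the expected PL bound $\mathrm{E}[\Vert \nabla_w f\Vert^2] \geq 2\mu(\mathrm{E}[f] - f^*(\lambda_i))$ --- which is licensed precisely because Proposition~\ref{prop_1} keeps the guard $\mathrm{E}[f] - f^*(\lambda_i) \leq B$ in force. This collapses the bound into the contraction
\begin{equation*}
\mathrm{E}_{\xi_{[i-1,t]}}[f(w_{i-1,t+1}, \lambda_i)] - f^*(\lambda_i) \leq (1 - \alpha\mu)\bigl(\mathrm{E}_{\xi_{[i-1,t-1]}}[f(w_{i-1,t}, \lambda_i)] - f^*(\lambda_i)\bigr) + \frac{L\alpha^2\sigma^2}{2},
\end{equation*}
identifying $\rho = 1 - \alpha\mu$. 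Finally I would unroll this recursion from $w_{i-1}$ over $t$ steps and bound the geometric sum by $\sum_{j=0}^{t-1}\rho^j \leq 1/(\alpha\mu)$, so the accumulated noise is at most $\frac{L\alpha^2\sigma^2}{2}\cdot\frac{1}{\alpha\mu} = \frac{L\alpha\sigma^2}{2\mu} \leq \frac{\sigma^2}{2\mu}$, the last step again using $\alpha \leq 1/L$. This is exactly \eqref{eq:convergence_SGD}, and $\alpha = 1/L$ recovers $\rho = 1 - \mu/L$.

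The individual manipulations are routine, so the main obstacle is conceptual: because the PL condition holds only in expectation, the argument cannot apply it pointwise inside the descent lemma; it must commit to the order ``conditional expectation over $\xi^i_t$, then full expectation over the history, then PL.'' Equally essential is carrying Proposition~\ref{prop_1} as an invariant certifying $\mathrm{E}[f(w_{i-1,t},\lambda_i)] - f^*(\lambda_i) \leq B$ for every $t \leq k$; without it Assumption~\ref{ass: PL_condition} yields nothing and the recursion cannot be propagated. The deliberate loosening of $1 - L\alpha/2$ to $1/2$ (rather than retaining the sharper coefficient) is what produces the clean rate $\rho = 1 - \alpha\mu$ as stated.
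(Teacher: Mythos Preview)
Your proposal is correct and follows essentially the same route as the paper: $L$-smoothness descent lemma, conditional expectation over the fresh noise using Assumption~\ref{ass: bounded_Var}, the simplification $-\alpha+\tfrac{L\alpha^2}{2}\le -\tfrac{\alpha}{2}$, application of the expected PL inequality, then recursion and a geometric-series bound. If anything you are slightly more careful than the paper's appendix proof: you make explicit that Proposition~\ref{prop_1} is what keeps the guard $\mathrm{E}[f]-f^*\le B$ alive so Assumption~\ref{ass: PL_condition} remains applicable at every step, and you retain general $\alpha\le 1/L$ throughout (bounding $L\alpha\le 1$ only at the end), whereas the paper specializes to $\alpha=1/L$ midway.
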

\begin{proof}
See Section~\ref{sec:proof_th_sgd} in the Appendix for a proof.
\end{proof}

\subsection{Optimality Tracking}\label{sec:opt_tracking}
In the following, we define the function $\phi_v(\lambda_i) \coloneqq \mathrm{E}_v\left[ f(v,\,\lambda_i) \right]-f^*(\lambda_i)$ where $v$ is $d$-dimensional real random vector.
As in~\citep{Gargiani2020TransferringOA} but in a more relaxed setting, i.e. local PL in place of local strong-convexity, we study the optimality tracking properties of H-SGD. In particular, under the considered assumptions and by exploiting the previously introduced results on the convergence of SGD, with Theorem~\ref{theorem_1} we characterize the maximum allowed variation of the homotopy parameter across homotopy iterations of H-SGD such that, if $\phi_{w_i}(\lambda_i)\leq r$, then also $\phi_{w_{i+1}}(\lambda_{i+1})\leq r$. The upper bound that we derive depends on the number of iterations $k$ performed with SGD as well as on the convergence characteristics of SGD and the structural properties of the parametric problems. This result applied recursively across homotopy iterations leads to conclude that, if we adopt a ``small enough'' increasing step for the homotopy parameter, H-SGD can track in expectation an $r$-optimal solution from source to target problem.   

Before proceeding with the actual optimality tracking analysis (Theorem~\ref{theorem_1}), we study the conditions on $w_i$ and $\Delta \lambda_{i+1}$ such that $\phi_{w_i}(\lambda_{i+1})\leq B$, where $w_i$ is  the approximate solution of the problem associated with parameter $\lambda_i$ that is also used as starting point for the next parametric problem. 
\begin{lemma}\label{lemma:initial_conditions}
Assume $\Vert \lambda_{i+1} - \lambda_i \Vert \leq \epsilon$, $0\leq \epsilon < \frac{B}{\delta + \gamma}$ and let $w_i$ denote the $i$-th iterate of Algorithm~\ref{alg:hsgd} with $\alpha\leq \frac{1}{L}$. Under Assumptions~\ref{ass: regularity_0}-~\ref{ass: regularity_2} and~\ref{ass: L_smoothness}-~\ref{ass: PL_condition}, if $\phi_{w_i}(\lambda_i)\leq B-(\delta+\gamma)\epsilon$, then $\phi_{w_i}(\lambda_{i+1})\leq B$.
In addition, let $k_{\max} \coloneqq \Bigg\lceil\log_{\rho}\left( 1 - \frac{2\mu(\delta + \gamma)\epsilon + \sigma^2}{2\mu B} \right)\Bigg\rceil$. 
If $\phi_{w_i}(\lambda_{i+1})\leq B$, $0\leq \epsilon< \frac{1}{\delta + \gamma}\left(B - \frac{\sigma^2}{2\mu} \right)$ and $k\geq k_{\max}$, then $\phi_{w_{i+1}}(\lambda_{i+1})\leq B-(\delta+\gamma)\epsilon$.
\end{lemma}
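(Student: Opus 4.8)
The plan is to treat the two implications in the statement separately, since each rests on a different ingredient. For the first implication, I would control the discrepancy between $\phi_{w_i}(\lambda_{i+1})$ and $\phi_{w_i}(\lambda_i)$ using only the two Lipschitz-type regularity assumptions. Since $f^*$ is deterministic and expectation is linear, I can write
\[
\phi_{w_i}(\lambda_{i+1}) - \phi_{w_i}(\lambda_i) = \mathbb{E}_{w_i}\left[ f(w_i, \lambda_{i+1}) - f(w_i, \lambda_i) \right] - \left( f^*(\lambda_{i+1}) - f^*(\lambda_i) \right).
\]
I would then upper bound the first term by $\delta \Vert \lambda_{i+1} - \lambda_i \Vert$ via Assumption~\ref{ass: regularity_1} (applied pointwise in $w$ and then integrated against the law of $w_i$) and the negated second term by $\gamma \Vert \lambda_{i+1} - \lambda_i \Vert$ via Assumption~\ref{ass: regularity_2}. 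With $\Vert \lambda_{i+1} - \lambda_i \Vert \leq \epsilon$ this yields $\phi_{w_i}(\lambda_{i+1}) \leq \phi_{w_i}(\lambda_i) + (\delta+\gamma)\epsilon$, and substituting the hypothesis $\phi_{w_i}(\lambda_i) \leq B - (\delta+\gamma)\epsilon$ gives $\phi_{w_i}(\lambda_{i+1}) \leq B$ at once.

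For the second implication I would invoke Theorem~\ref{theorem_SGD} with $w_i$ playing the role of the random initial point for the $(i+1)$-th homotopy problem $f(\cdot,\lambda_{i+1})$. Its hypothesis is exactly the assumed $\phi_{w_i}(\lambda_{i+1}) \leq B$, and running $k$ inner SGD steps produces $w_{i+1} = w_{i,k}$, so setting $t = k$ in~\eqref{eq:convergence_SGD} gives
\[
\phi_{w_{i+1}}(\lambda_{i+1}) \leq \rho^k \phi_{w_i}(\lambda_{i+1}) + \frac{\sigma^2}{2\mu} \leq \rho^k B + \frac{\sigma^2}{2\mu}.
\]
It then remains to verify that this quantity is at most $B - (\delta+\gamma)\epsilon$, which after rearrangement reduces to the scalar inequality $\rho^k \leq 1 - \frac{2\mu(\delta+\gamma)\epsilon + \sigma^2}{2\mu B}$.

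The main obstacle is this last step: solving the scalar inequality for $k$ and justifying that $k \geq k_{\max}$ is exactly what is needed. Because $\rho = 1-\alpha\mu \in (0,1)$, the map $x \mapsto \log_\rho x$ is decreasing, so applying it flips the inequality and produces $k \geq \log_\rho\!\left(1 - \frac{2\mu(\delta+\gamma)\epsilon + \sigma^2}{2\mu B}\right)$; taking the ceiling shows $k \geq k_{\max}$ suffices. The delicate point is well-definedness: $\log_\rho$ is real and nonnegative only when its argument lies in $(0,1]$, and positivity of $1 - \frac{2\mu(\delta+\gamma)\epsilon + \sigma^2}{2\mu B}$ is precisely equivalent to $\epsilon < \frac{1}{\delta+\gamma}\left(B - \frac{\sigma^2}{2\mu}\right)$. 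This is exactly the hypothesis imposed in the statement, and it is what makes $k_{\max}$ a well-defined nonnegative integer, so the chain of inequalities closes and the implication follows.
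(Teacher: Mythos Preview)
Your proposal is correct and follows essentially the same route as the paper: the first implication is obtained from the two regularity assumptions via a direct $(\delta+\gamma)\epsilon$ bound on $\phi_{w_i}(\lambda_{i+1})-\phi_{w_i}(\lambda_i)$, and the second from Theorem~\ref{theorem_SGD} applied to the $(i{+}1)$-th homotopy problem, yielding $\phi_{w_{i+1}}(\lambda_{i+1})\leq\rho^k B+\sigma^2/(2\mu)$ and then solving for $k$. Your explicit remark that the hypothesis $\epsilon<\frac{1}{\delta+\gamma}\bigl(B-\frac{\sigma^2}{2\mu}\bigr)$ is precisely what makes the argument of $\log_\rho$ lie in $(0,1]$ (and hence $k_{\max}$ well-defined) is a nice clarification that the paper leaves implicit.
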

\begin{proof}
See Section~\ref{sec:proof_lemma} in the Appendix for a proof.
\end{proof}

\begin{theorem}\label{theorem_1}
Assume there exists $\frac{\sigma^2}{2\mu}< r\leq B$ and $\tilde{\epsilon} \coloneqq\min \left\{ \epsilon_1,\,\epsilon_2\right\}$ with
\begin{equation}
\epsilon_1\coloneqq \frac{1}{(\delta+\gamma)}(B-r),\quad\epsilon_2 \coloneqq \frac{(1-\rho^k)\, r-\sigma^2/2\mu}{\rho^k\,(\delta+\gamma)}\,.
\end{equation}
In addition, let $k_{\max}\coloneqq \Bigg\lceil \log_{\rho} \left( 1-\frac{\sigma^2}{2\mu r} \right)  \Bigg\rceil$.
Consider Algorithm~\ref{alg:hsgd} with $\alpha \leq \frac{1}{L}$, $k\geq k_{\max}$ and $\Vert \lambda_{i+1} - \lambda_i \Vert \leq \epsilon$, where $0\leq\epsilon\leq \tilde{\epsilon}$.
Under Assumptions~\ref{ass: regularity_0}-~\ref{ass: regularity_2} and~\ref{ass: L_smoothness}-~\ref{ass: PL_condition}, if $\phi_{w_i}(\lambda_i)\leq r$, then $\phi_{w_{i+1}}(\lambda_{i+1})\leq r$.
\end{theorem}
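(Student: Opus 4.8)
The plan is to close a single-step induction on the scalar suboptimality gap $\phi_{w_i}(\lambda_i)$ by chaining two ingredients already in place: the regularity assumptions, which bound how much the gap grows when the homotopy parameter is perturbed with the iterate held fixed, and Theorem~\ref{theorem_SGD}, which bounds the contraction produced by the $k$ inner SGD steps. I expect the definitions of $\epsilon_1$, $\epsilon_2$ and $k_{\max}$ to be exactly the quantities needed to make the two bounds compose back down to the level $r$.

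First I would control the cost of switching from $f(\cdot,\lambda_i)$ to $f(\cdot,\lambda_{i+1})$ while keeping $w_i$ fixed. Decomposing
\[
f(w_i,\lambda_{i+1}) - f^*(\lambda_{i+1}) = \big[f(w_i,\lambda_i)-f^*(\lambda_i)\big] + \big[f(w_i,\lambda_{i+1})-f(w_i,\lambda_i)\big] + \big[f^*(\lambda_i)-f^*(\lambda_{i+1})\big],
\]
taking expectations, and applying Assumptions~\ref{ass: regularity_1} and~\ref{ass: regularity_2} with $\|\lambda_{i+1}-\lambda_i\|\leq\epsilon$, I obtain
\[
\phi_{w_i}(\lambda_{i+1}) \leq \phi_{w_i}(\lambda_i) + (\delta+\gamma)\epsilon \leq r + (\delta+\gamma)\epsilon.
\]
Since $\epsilon\leq\tilde\epsilon\leq\epsilon_1 = (B-r)/(\delta+\gamma)$, the right-hand side is at most $B$, so $\phi_{w_i}(\lambda_{i+1})\leq B$. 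This is precisely the hypothesis of Theorem~\ref{theorem_SGD} (equivalently, the conclusion of the first part of Lemma~\ref{lemma:initial_conditions}) and is what activates the expected PL condition for the inner SGD run on $f(\cdot,\lambda_{i+1})$ started from $w_i$.

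Next I would invoke Theorem~\ref{theorem_SGD} with $w_i$ as the random initial point and $\lambda_{i+1}$ as the active problem; after the $k$ inner steps produce $w_{i+1}$ this gives
\[
\phi_{w_{i+1}}(\lambda_{i+1}) \leq \rho^k\,\phi_{w_i}(\lambda_{i+1}) + \frac{\sigma^2}{2\mu} \leq \rho^k\big[r+(\delta+\gamma)\epsilon\big] + \frac{\sigma^2}{2\mu}.
\]
It then remains to check that the last expression is at most $r$. Rearranging the desired inequality $\rho^k[r+(\delta+\gamma)\epsilon]+\sigma^2/(2\mu)\leq r$ isolates $\epsilon$ and yields exactly $\epsilon\leq \big[(1-\rho^k)r-\sigma^2/(2\mu)\big]/[\rho^k(\delta+\gamma)] = \epsilon_2$, which holds since $\epsilon\leq\tilde\epsilon\leq\epsilon_2$. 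This closes the step and gives $\phi_{w_{i+1}}(\lambda_{i+1})\leq r$.

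The only real obstacle is bookkeeping rather than conceptual: I must confirm that $k\geq k_{\max}$ makes $\epsilon_2$ nonnegative, so that the admissible window $0\leq\epsilon\leq\tilde\epsilon$ is nonempty and the final rearrangement is meaningful. Indeed $\epsilon_2\geq 0$ is equivalent to $\rho^k\leq 1-\sigma^2/(2\mu r)$, and because $0<\rho<1$ and $r>\sigma^2/(2\mu)$ (so the logarithm is defined and the decreasing base reverses the inequality), this is exactly $k\geq\log_\rho\!\big(1-\sigma^2/(2\mu r)\big)=k_{\max}$. With this verified, the perturbation bound from the regularity assumptions and the contraction bound from the SGD analysis assemble directly into the claimed tracking property.
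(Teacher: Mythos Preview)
Your proof is correct and follows essentially the same route as the paper's own argument: use the regularity assumptions to bound $\phi_{w_i}(\lambda_{i+1})\leq \phi_{w_i}(\lambda_i)+(\delta+\gamma)\epsilon$ (so that $\epsilon\leq\epsilon_1$ keeps the gap within $B$ and activates Theorem~\ref{theorem_SGD}), then apply the SGD contraction and solve $\rho^k r+\rho^k(\delta+\gamma)\epsilon+\sigma^2/(2\mu)\leq r$ to recover $\epsilon\leq\epsilon_2$ and $k\geq k_{\max}$. The only cosmetic difference is that the paper applies Theorem~\ref{theorem_SGD} first and then expands $\phi_{w_i}(\lambda_{i+1})$ via the triangle inequality inside the resulting bound, whereas you expand first and then contract; the final inequality and the roles of $\epsilon_1$, $\epsilon_2$, $k_{\max}$ are identical.
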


\begin{proof}
See Section~\ref{sec:proof_th_ot} in the Appendix for a proof.
\end{proof}
See Figure~\ref{fig:opt_track} in Section~\ref{app:additional_figures} of the Appendix for a graphical representation of the derived results.
\subsection{Linear Convergence Rate}\label{sec:lin_conv}
We now study the convergence rate of H-SGD and, in particular, if it is possible to recover a global linear rate of convergence to a minimizer's neighborhood. The results derived in Theorem~\ref{theorem_lin_conv} confirm that, in the considered setting and with a specifically designed schedule for the homotopy parameter, H-SGD achieves the desired rate of convergence.
\begin{theorem}\label{theorem_lin_conv}
Let $\tilde{\rho}\in\left( 1-\frac{\sigma^2}{2\mu}\frac{1}{B},1 \right)$ and consider Algorithm~\ref{alg:hsgd} with $\alpha\leq \frac{1}{L}$, $\phi_{w_0}(\lambda_0)\leq r$ with $\frac{\sigma^2}{2\mu}\frac{1}{(1-\tilde{\rho})}\leq r\leq B $ and  $k\geq \log_{\rho}(\tilde{\rho})$. In addition, let $\epsilon_1\coloneqq \frac{1}{(\delta+\gamma)}(B-r)$ and 

\begin{equation}
C_{\tilde{\rho}}\coloneqq
\begin{cases}
      1 & \text{if }k\geq \log_{\rho}(\tilde{\rho})-\log_{\rho}\left( 1 + \frac{\delta+\gamma}{\varepsilon_0} \right)\\
      \frac{\tilde{\rho}-\rho^k}{\rho^k}\frac{\varepsilon_0}{(\delta+\gamma)} & \text{otherwise,}
    \end{cases} 
\end{equation}
with $\varepsilon_0\coloneqq \mathrm{E}_{\xi_{[0]}}\left[ f(w_0,\,\lambda_0) \right] - f^*(\lambda_0)$.
Under Assumptions~\ref{ass: regularity_0}-~\ref{ass: regularity_2} and~\ref{ass: L_smoothness}-~\ref{ass: PL_condition}, if $\Vert \lambda_{i+1}-\lambda_i \Vert\leq \min\left\{e^{-\eta\,i}, \epsilon_1\right\}$ with $\eta\geq \ln{\left(C_{\tilde{\rho}}\,\tilde{\rho}\right)}$, then
\begin{equation}
\mathrm{E}_{\xi_{[i+1]}}\left[ f(w_{i+1},\,\lambda_{i+1}) \right] - f^*(\lambda_{i+1}) \leq \tilde{\rho}^{i+1}\left[ \mathrm{E}_{\xi_{[0]}}\left[  f(w_0, \,\lambda_0) \right] - f^*(\lambda_0) \right]  + \frac{\sigma^2}{2\mu}\sum_{j=0}^i \tilde{\rho}^j\,.
\end{equation}
\end{theorem}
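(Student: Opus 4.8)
The plan is to prove the stated inequality by induction on $i$, after recognizing its right-hand side as the closed form of a one-step geometric recursion. Writing $\phi_{w_i}(\lambda_i)=\mathrm{E}_{\xi_{[i]}}[f(w_i,\lambda_i)]-f^*(\lambda_i)$ and setting $R_m\coloneqq \tilde{\rho}^{m}\,\phi_{w_0}(\lambda_0)+\frac{\sigma^2}{2\mu}\sum_{j=0}^{m-1}\tilde{\rho}^{j}$, the target is exactly $\phi_{w_{i+1}}(\lambda_{i+1})\leq R_{i+1}$; since $R_{m+1}=\tilde{\rho}R_m+\frac{\sigma^2}{2\mu}$ and $R_0=\phi_{w_0}(\lambda_0)=\varepsilon_0\leq r$, it suffices to establish the single-step contraction
\[
\phi_{w_{i+1}}(\lambda_{i+1})\leq \tilde{\rho}\,\phi_{w_i}(\lambda_i)+\frac{\sigma^2}{2\mu}
\]
at each homotopy iteration, while keeping the hypotheses of Theorem~\ref{theorem_SGD} valid along the way. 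The whole proof thus reduces to this contraction together with a verification that the invariant $\phi_{w_i}(\lambda_i)\leq r$ is preserved.

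First I would assemble the basic per-iteration estimate. The warm start for the $(i+1)$-th subproblem is $w_i$, but it must be evaluated at the new parameter $\lambda_{i+1}$, so I would pass from $\lambda_i$ to $\lambda_{i+1}$ using Assumptions~\ref{ass: regularity_1} and~\ref{ass: regularity_2}, giving $\phi_{w_i}(\lambda_{i+1})\leq \phi_{w_i}(\lambda_i)+(\delta+\gamma)\Vert\lambda_{i+1}-\lambda_i\Vert$. Provided this gap is still at most $B$, applying Theorem~\ref{theorem_SGD} to the $k$ inner SGD steps yields
\[
\phi_{w_{i+1}}(\lambda_{i+1})\leq \rho^{k}\,\phi_{w_i}(\lambda_{i+1})+\frac{\sigma^2}{2\mu}\leq \rho^{k}\phi_{w_i}(\lambda_i)+\rho^{k}(\delta+\gamma)\Vert\lambda_{i+1}-\lambda_i\Vert+\frac{\sigma^2}{2\mu}.
\]
The role of the clamp $\Vert\lambda_{i+1}-\lambda_i\Vert\leq\epsilon_1=\frac{1}{\delta+\gamma}(B-r)$ is precisely to guarantee, through the first part of Lemma~\ref{lemma:initial_conditions} combined with the inductive invariant $\phi_{w_i}(\lambda_i)\leq r$, that $\phi_{w_i}(\lambda_{i+1})\leq B$, so that Theorem~\ref{theorem_SGD} is indeed applicable at every step.

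The heart of the argument is then to upgrade the factor from $\rho^{k}$ to $\tilde{\rho}$ by absorbing the drift term $\rho^{k}(\delta+\gamma)\Vert\lambda_{i+1}-\lambda_i\Vert$. Since $k\geq\log_{\rho}(\tilde{\rho})$ forces $\rho^{k}\leq\tilde{\rho}$, it is enough to dominate the drift by $(\tilde{\rho}-\rho^{k})$ times a lower bound on the running gap. Feeding the induction hypothesis $\phi_{w_i}(\lambda_i)\leq R_i$ into the estimate above, the desired $\phi_{w_{i+1}}(\lambda_{i+1})\leq \tilde{\rho}R_i+\frac{\sigma^2}{2\mu}$ holds as soon as $\rho^{k}(\delta+\gamma)\Vert\lambda_{i+1}-\lambda_i\Vert\leq(\tilde{\rho}-\rho^{k})R_i$, and using $R_i\geq\tilde{\rho}^{i}\varepsilon_0$ this reduces to $\rho^{k}(\delta+\gamma)\Vert\lambda_{i+1}-\lambda_i\Vert\leq(\tilde{\rho}-\rho^{k})\tilde{\rho}^{i}\varepsilon_0$. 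This is exactly what the exponential schedule $\Vert\lambda_{i+1}-\lambda_i\Vert\leq e^{-\eta i}$ with the stated $\eta$ and $C_{\tilde{\rho}}$ is engineered to enforce: the two cases in the definition of $C_{\tilde{\rho}}$ distinguish whether $k$ is already large enough that $\rho^{k}$ alone beats the required rate (so no further decay is needed, $C_{\tilde{\rho}}=1$) or whether the schedule itself must supply the extra per-iteration shrinkage. Unrolling the resulting contraction completes the induction, and the invariant $\phi_{w_i}(\lambda_i)\leq r$ is preserved because $R_i$ is a convex combination of $\varepsilon_0\leq r$ and $\frac{\sigma^2}{2\mu(1-\tilde{\rho})}\leq r$, hence never exceeds $r$.

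The main obstacle I anticipate is the coupled constant bookkeeping in this last step. The schedule must be fast enough to force the $\tilde{\rho}$-rate, which pushes $\Vert\lambda_{i+1}-\lambda_i\Vert$ downward, yet the clamp by $\epsilon_1$ and the invariant $\phi_{w_i}(\lambda_i)\leq r\leq B$ must simultaneously hold at every iteration so that the SGD estimate of Theorem~\ref{theorem_SGD} can be invoked. Reconciling the lower bound $R_i\geq\tilde{\rho}^{i}\varepsilon_0$ used to absorb the drift with the two-case calibration of $\eta$ through $C_{\tilde{\rho}}$, and checking it uniformly in $i$ — in particular at the first step, where $e^{-\eta\cdot 0}=1$ and only the clamp and the exact value $R_0=\varepsilon_0$ are available — is where the delicate constant-chasing concentrates.
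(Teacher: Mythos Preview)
Your proposal is correct and follows essentially the same route as the paper: the per-iteration estimate $\phi_{w_{i+1}}(\lambda_{i+1})\leq \rho^{k}\phi_{w_i}(\lambda_i)+\rho^{k}(\delta+\gamma)\|\lambda_{i+1}-\lambda_i\|+\sigma^{2}/(2\mu)$ is combined with induction on the target bound $R_i$, the clamp $\epsilon_1$ keeps Theorem~\ref{theorem_SGD} applicable, and the drift condition reduces to $\rho^{k}(\delta+\gamma)\Delta\lambda_{i+1}\leq(\tilde{\rho}-\rho^{k})\tilde{\rho}^{i}\varepsilon_0$, which the exponential schedule with $C_{\tilde{\rho}}$ enforces. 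One small caution: the literal ``single-step contraction'' $\phi_{w_{i+1}}(\lambda_{i+1})\leq\tilde{\rho}\,\phi_{w_i}(\lambda_i)+\sigma^{2}/(2\mu)$ you announce in the first paragraph cannot be established (if $\phi_{w_i}(\lambda_i)$ happens to be near zero the drift term cannot be absorbed), and indeed you correctly pivot in the third paragraph to the weaker but sufficient inductive step $\phi_{w_{i+1}}(\lambda_{i+1})\leq\tilde{\rho}R_i+\sigma^{2}/(2\mu)$, which is exactly what the paper proves; just make sure the write-up reflects that from the start.
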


\begin{proof}
See Section~\ref{sec:proof_th_lc} in the Appendix for a proof.
\end{proof}
\section{Experimental Evaluation}\label{sec:experiments}
In this section, we empirically validate the theoretical results derived in Theorem~\ref{theorem_lin_conv}. First, we consider a $1$-dimensional toy regression problem to illustrate and visualize some of the basic properties of H-SGD. In this easy scenario, the introduced assumptions can be trivially verified by inspection (see the Figures~\ref{fig:toy_loss}-~\ref{fig:toy_mu} in the Appendix). We then move to more complex and high-dimensional scenarios where the assumptions can not be verified. Inspired by~\citet{10.5555/3305381.3305498} and~\citet{Gargiani2020TransferringOA}, we consider the task of regressing with a neural network from input to output of a sinusoidal wave. Finally, we also consider a non-convex classification task based on the combination of logistic regression with a non-linear model for moon-shaped binary data~\citep{3603}. 
In all the considered scenarios, we compare the numerical performance of H-SGD with those of SGD and tune the step-size based on the performance of the latter. 

\subsection{Toy-Problem}\label{sec:toy_case}
We start with an easy regression problem motivated by~\citet{2016arXiv160104114M}: a $1$-dimensional neural network with $\mathrm{erf}$ as activation function (see Figure~\ref{fig:toy_net} in Section~\ref{app:additional_figures} of the Appendix for a graphical representation). We generate a synthetic dataset of $N=100$ samples, where $x_j \in \left[-1, 1 \right]$, $y_j = 3\cdot x_j + \epsilon_j$ and $\epsilon_j\sim \mathcal{N}\left(0,\,1\right)$. 
Regarding the choice of a value for the step-size, we use an estimate $\tilde{L}$ of the smoothness constant $L$ and set $\alpha = {1}/{\tilde{L}}$.    
As loss, we use the mean squared error, which, composed with the regressing model, leads to the following non-convex optimization problem 
\begin{equation}\label{eq:obj_fun_toy_case}
w^*\,=\,\arg\min_{w\in \mathbb{R}}\,\frac{1}{N} \sum_{j=1}^N \,(y_j-\mathrm{erf}(w\cdot x_j))^2\,.
\end{equation}
By plotting the objective function with respect to $w$ (see Fig.~\ref{fig:toy_loss} in Section~\ref{app:additional_figures} of the Appendix), it is easy to observe that the PL condition holds globally, with the value of $\mu$ increasing by approaching the minimizer and $\mu\rightarrow 0$ for $w\rightarrow \pm\infty$ (see Fig.~\ref{fig:toy_mu} in the Appendix). Consequently, SGD enjoys a global linear convergence rate as proved in~\citep{pmlr-v89-vaswani19a} but the rate itself, which depends on the value of $\mu$, will dramatically worsen the further the iterates are from the minimizer, leading to a great overall sensitivity of the method in terms of convergence rate to the initialization. On the other side, H-SGD, thanks to the homotopy principle, goes around that issue by preserving the vicinity to the minimizer of the current homotopy problem at each homotopy iteration (see Figures~\ref{fig:toy_hsgd_iterates} and~\ref{fig:toy_sgd_iterates} in Section~\ref{app:additional_figures} of the Appendix). In order to achieve that, given $w_0$ as initial value, we set $y_{j,0}=w_0\cdot x_{j}$ for all $j=1,\dots, N$ and define the following homotopy transformation
\begin{equation}\label{eq:homotopy_mapping}
y_{j,\,\lambda} = \lambda \, y_j + (1-\lambda)\, y_{j,\, 0}\,.
\end{equation}
As suggested by the theory (see Theorem~\ref{theorem_lin_conv}), we select an exponentially decreasing scheme for the increment $\Delta\lambda_i$ in order to achieve linear convergence. As shown in Figure~\ref{fig:toy_gap} in the Appendix, both H-SGD and SGD enjoy a linear rate of convergence, but H-SGD shows a superior numerical performance. This is due to the fact that the method is designed such that its iterates always lie in the neighborhood of the minimizers where more favorable values of $\mu$ lead to a faster convergence. This fact allows H-SGD to enjoy a faster global convergence rate than that of SGD.  
\subsection{Regression with Deep Neural Networks}\label{sec:sine_wave}
Our second experiment is inspired by~\citet{10.5555/3305381.3305498} and~\citet{Gargiani2020TransferringOA} and focuses on studying the numerical performance of H-SGD on the task of regressing from input to output of a sinusoidal function corrupted by Gaussian noise. In particular, the input data are sampled uniformly from the interval $[-1,\, 1]$ and $y_j= \sin(10\cdot x_j) + \epsilon_j$ with $\epsilon_j\sim \mathcal{N}(0, 0.1)$ for all $j=1,\dots, 500$.
The regressor is a feedforward neural network with two hidden layers, each of $10$ units, and hyperbolic tangent as activation function. As for the previous benchmark, we use the mean squared error as loss. We employ the same values of step-size $\alpha$ and mini-batch $M$ for both H-SGD and SGD, where the step-size value is tuned based on the numerical performance of SGD for the selected mini-batch size ($M=5$).  
Regarding H-SGD, we set $y_{j,0} = x_j^2 + \epsilon_j$ with $\epsilon_j\sim\mathcal{N}(0,\,0.01)$ and employ the same homotopy mapping as in Equation~\ref{eq:homotopy_mapping}. As shown in Figure~\ref{fig:sin_gap} in the Appendix, also in this scenario H-SGD shows a superior numerical performance than SGD, i.e. H-SGD reaches a loss of $10^{-1}$ roughly $4$ times faster and achieves convergence more than $2$ times faster than SGD. 
The superior numerical performance of H-SGD can be attributed to its ability of tracking a solution across homotopy iterations (see Figure~\ref{fig:hsgd_pred_lmbd} in the Appendix) which ensures the method to always work in the vicinity of a minimizer.
\subsection{Non-Linear Binary Classification with Logistic Regression} \label{sec:binary_classification}
Finally, we test H-SGD also on a classification benchmark with a logistic regression task. In particular, we use a 2-dimensional binary moon-shaped dataset~\citep{3603} with $1000$ samples corrupted by Gaussian noise. As the dataset is clearly not linearly separable, we opt for a cubic model, which, used in combination with the logistic regression framework, leads to a non-convex objective function where the optimization variable $w$ is the collection of the model's coefficients. For both H-SGD and SGD we use a mini-batch size of $20$ and tune the value of the step-size on the SGD's performance for that mini-batch size. Regarding H-SGD, we use as source task the one obtained considering a linear instead of a cubic model, which results in a convex and hence ``easy'' optimization problem. We then gradually increase the non-linearity of the model, i.e. non-convexity of the problem, until reaching in the final homotopy iteration the target problem with the desired cubic model. This homotopy map is obtained by multiplying the coefficients of the non-linear terms in the model by $\lambda$ as follows
\begin{equation}
\lambda (c_1\,x_{j,1}^3 + c_2\,x_{j,2}^3 + c_3\, x_{j,1}^2 + c_4\,x_{j,2}^2 + c_5\,x_{j,1}^2\,x_{j,2} + c_6\,x_{j,1}\,x_{j,2}^2) + c_7\,x_{j,1} + c_8\,x_{j,2} + c_9\,.
\end{equation}
For the homotopy parameter we adopt the increasing schedule that is suggested in Theorem~\ref{theorem_lin_conv}. H-SGD outperforms SGD by reaching an error of $0.1$ more than two times faster than SGD (see Figure~\ref{fig:hsgd_sgd_classification} in the Appendix).
   
\section{Conclusions and Future Work}
In this paper we propose a new first-order stochastic method for non-convex large-scale problems, called Homotopy-SGD (H-SGD), based on the combination of homotopy methods and SGD. This new homotopy-based optimization method allows one to exploit easy-to-solve or already-solved problems to solve new and complex ones. This is achieved by approximately and sequentially solving a sequence of optimization problems where the source problem is gradually morphed via a homotopy map into the target one. We conduct a theoretical analysis of the optimality tracking properties and convergence rate of H-SGD under some realistic and mild assumptions. 
The theoretical results are confirmed by some empirical evaluations, which also show the great potential in terms of performance of combining SGD with an homotopy strategy. In addition, H-SGD shows interesting connections with many practical existing heuristics proposed in the machine learning literature to speed up the convergence of first-order methods, allowing for a new and more rigorous interpretation of the latter.
The current major limitation of the method relies in the design of the homotopy map. Future work should focus on exploiting the specific problem structure to design optimal homotopy maps. Moreover, under additional assumptions, more theoretical results concerning the quality of the tracked solutions  could be derived.

\subsubsection*{Acknowledgments}
This work has partly been supported by the European Research Council (ERC) under the European Union’s Horizon 2020 research and innovation programme under grant no.\ 716721 as well as by the German Federal Ministry for Economic Affairs and Energy (BMWi) via DyConPV (0324166B), and by DFG via Research Unit FOR 2401. 
In addition, Q. Tran-Dinh has partly been supported by the National Science Foundation (NSF), grant. no. 1619884.

\bibliography{iclr2021_conference}
\bibliographystyle{iclr2021_conference}

\newpage
\appendix
\section{Appendix}
\section{Additional Figures}\label{app:additional_figures}

%
\begin{figure}[!htbp]
  \centering
    \begin{minipage}[t]{0.45\textwidth}
  \centering
    \includegraphics[width=1.15\textwidth]{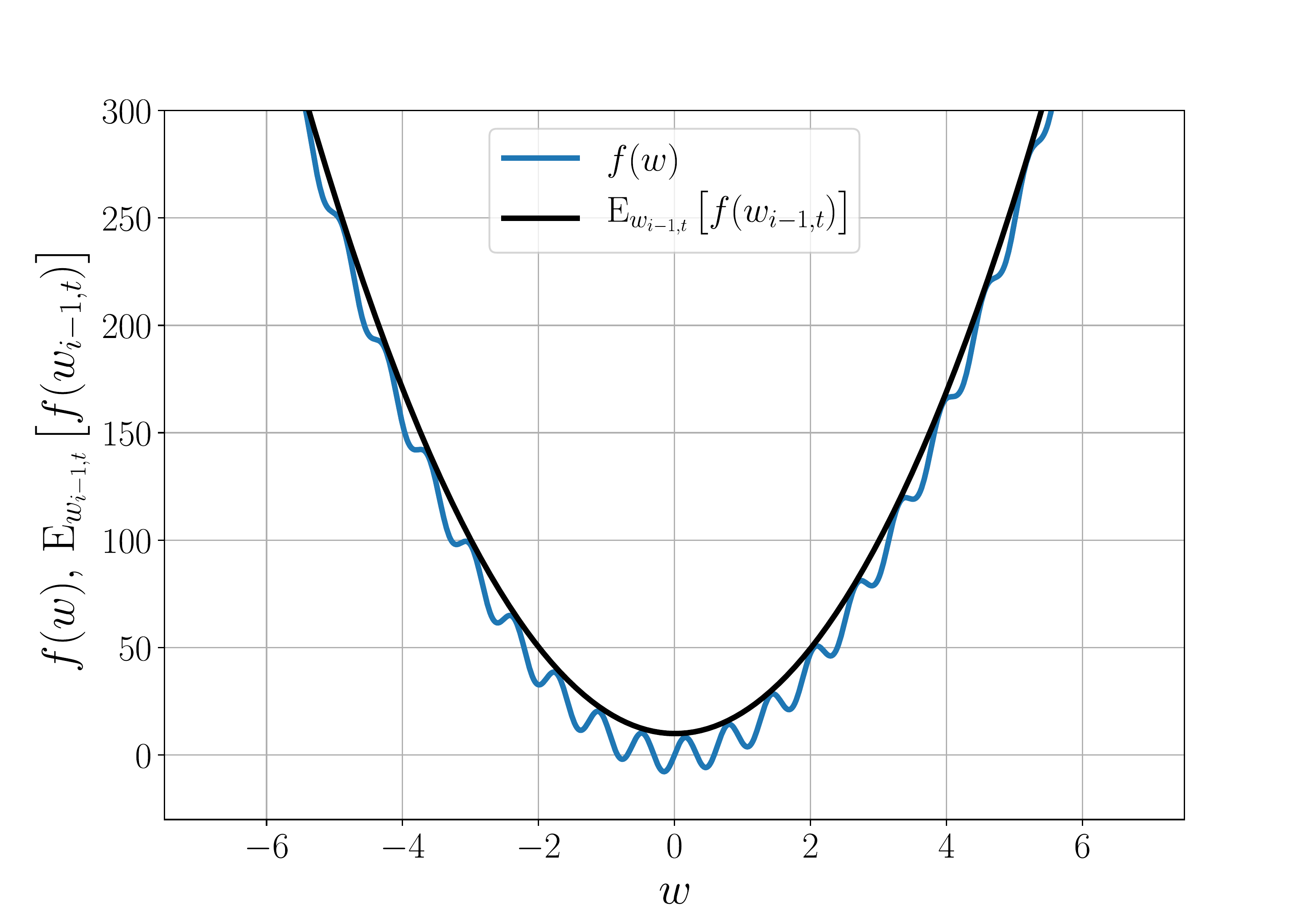}
    \caption{Graphical representation of a $1$-dimensional function with $w_{i-1,t}\sim\mathcal{N}\left(0,\, 1 \right)$ that satisfies the ``expected'' PL condition (Assumption~\ref{ass: PL_condition}) but not the classical PL condition in the considered region.}\label{fig:PL_condition}
  \end{minipage}
  \hspace{.8cm}{
  \begin{minipage}[t]{0.45\textwidth}
  \centering
    \includegraphics[width=1.15\linewidth]{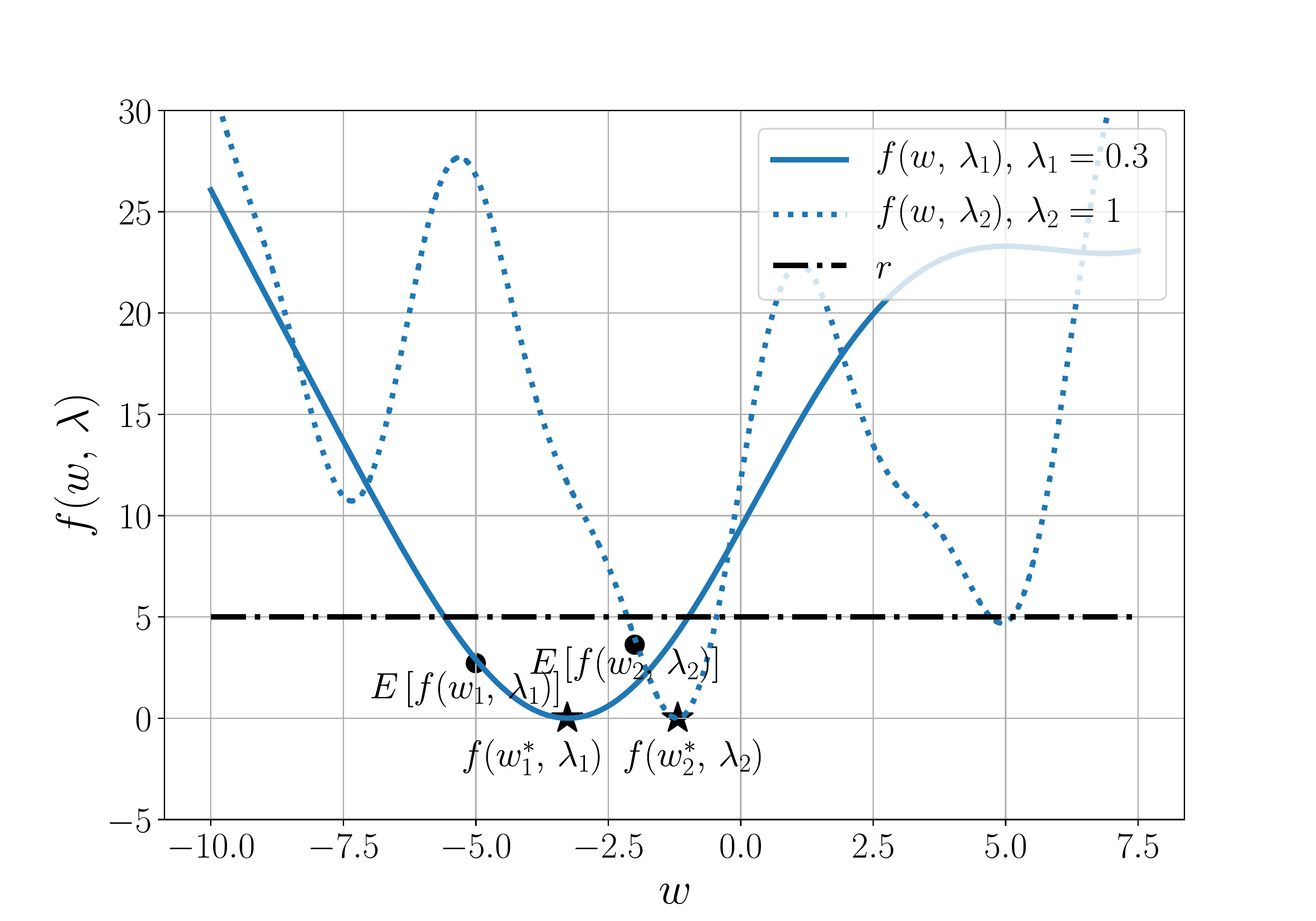} 
    \caption{Graphical representation of the results derived in Theorem~\ref{theorem_1} on the optimality tracking properties of H-SGD for a general non-convex 1-dimensional function. In particular, as shown in the figure, under the considered assumptions and for ``small enough'' variations of the homotpy parameter, H-SGD tracks in expectation an $r$-optimal solution across homotopy iterations.}\label{fig:opt_track}
  \end{minipage}}
\end{figure}
\FloatBarrier

\begin{figure}[!htbp]
  \centering
  \begin{minipage}[t]{0.45\textwidth}
    \includegraphics[width=1.15\textwidth]{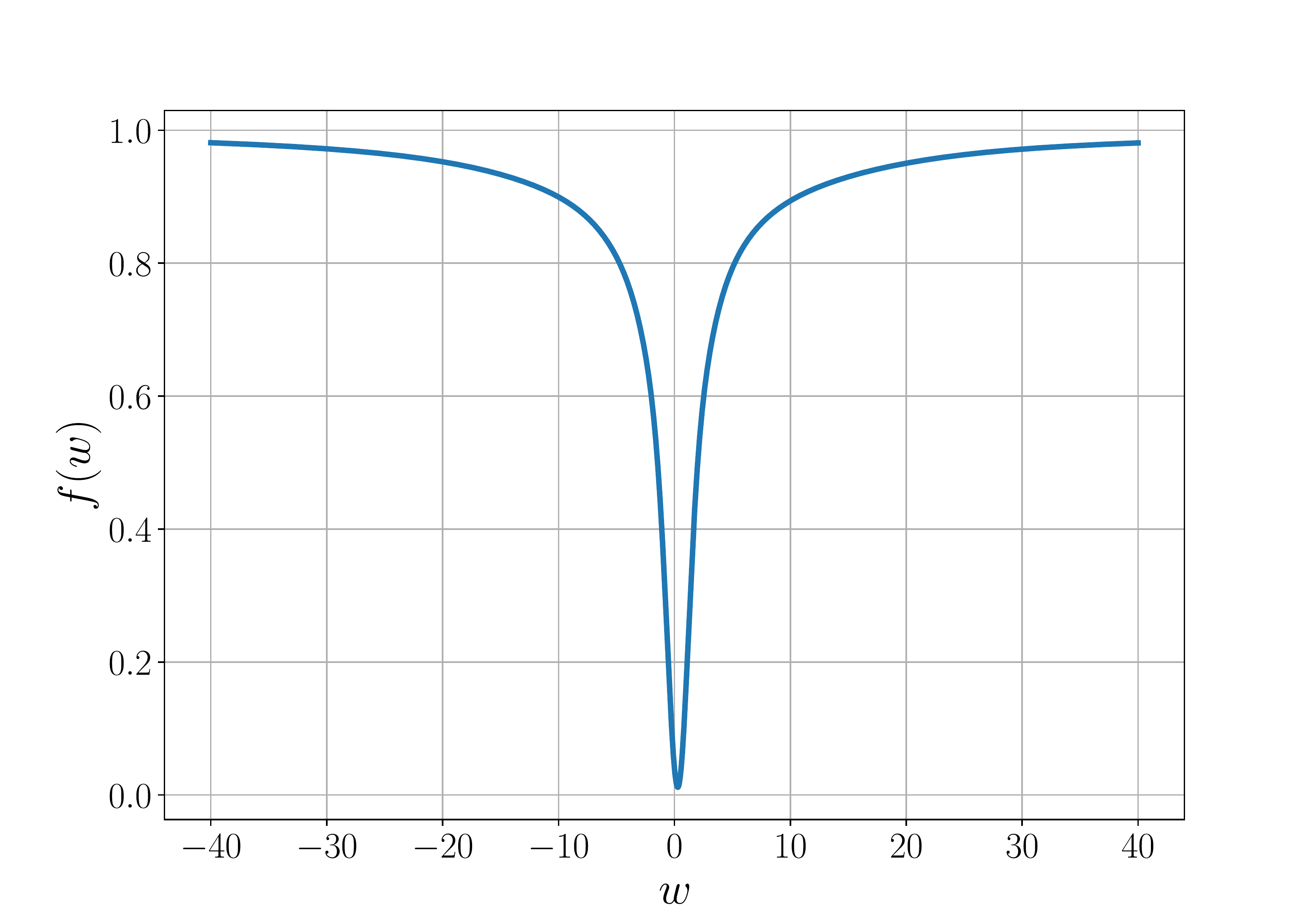}
    \caption{Graphical representation of the objective function in Problem~\eqref{eq:obj_fun_toy_case} vs $w$.}\label{fig:toy_loss}
  \end{minipage}
  \hspace{.8cm}
  \begin{minipage}[t]{0.45\textwidth}
    \includegraphics[width=1.15\textwidth]{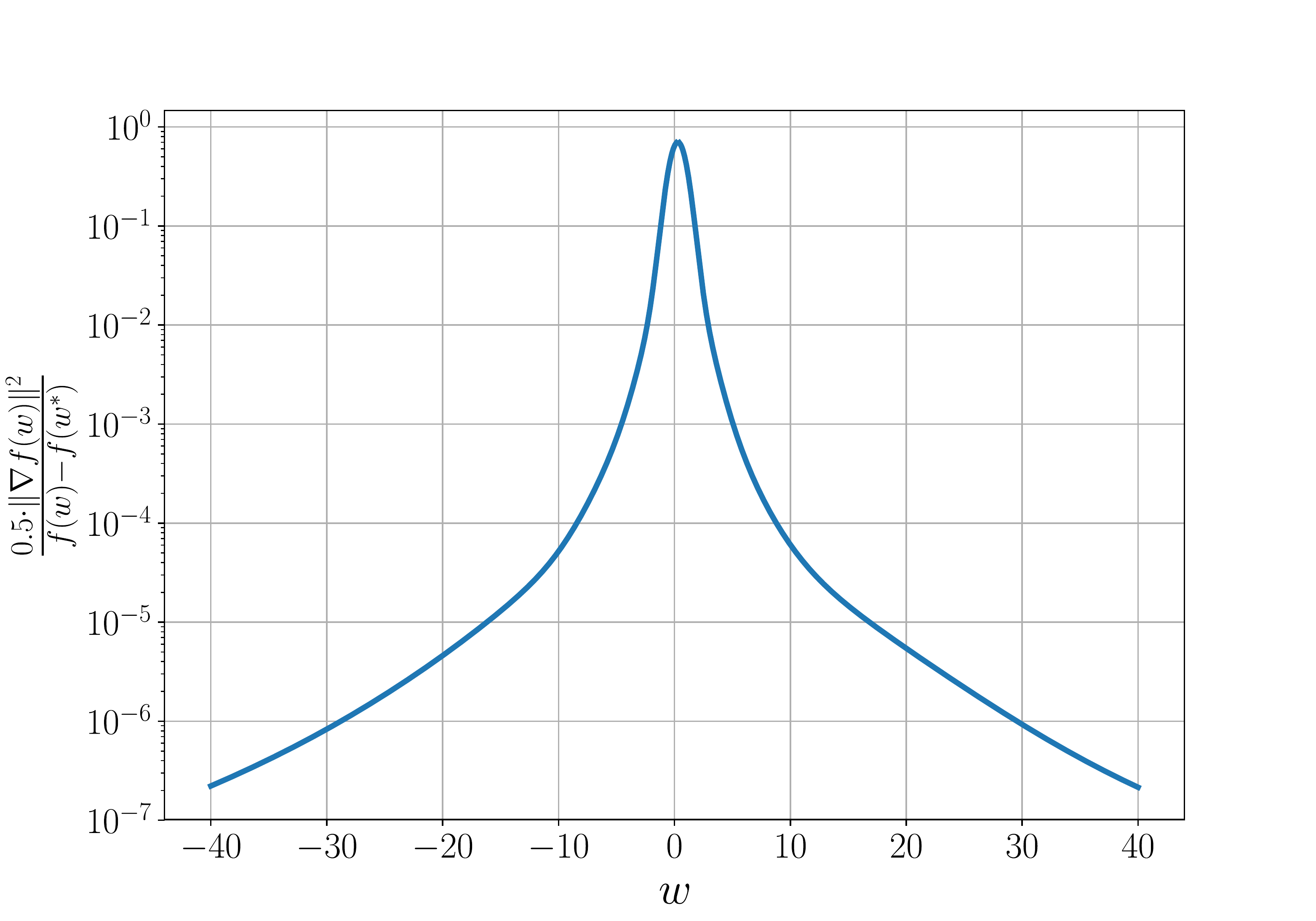}
    \caption{Visualization of the estimated $\mu$ parameter for the objective function in Problem~\eqref{eq:obj_fun_toy_case} vs $w$.}\label{fig:toy_mu}
  \end{minipage}
\end{figure}
\FloatBarrier

\begin{figure}[!htbp]
  \centering
  \begin{minipage}[t]{0.45\textwidth}
    \includegraphics[width=1.15\textwidth]{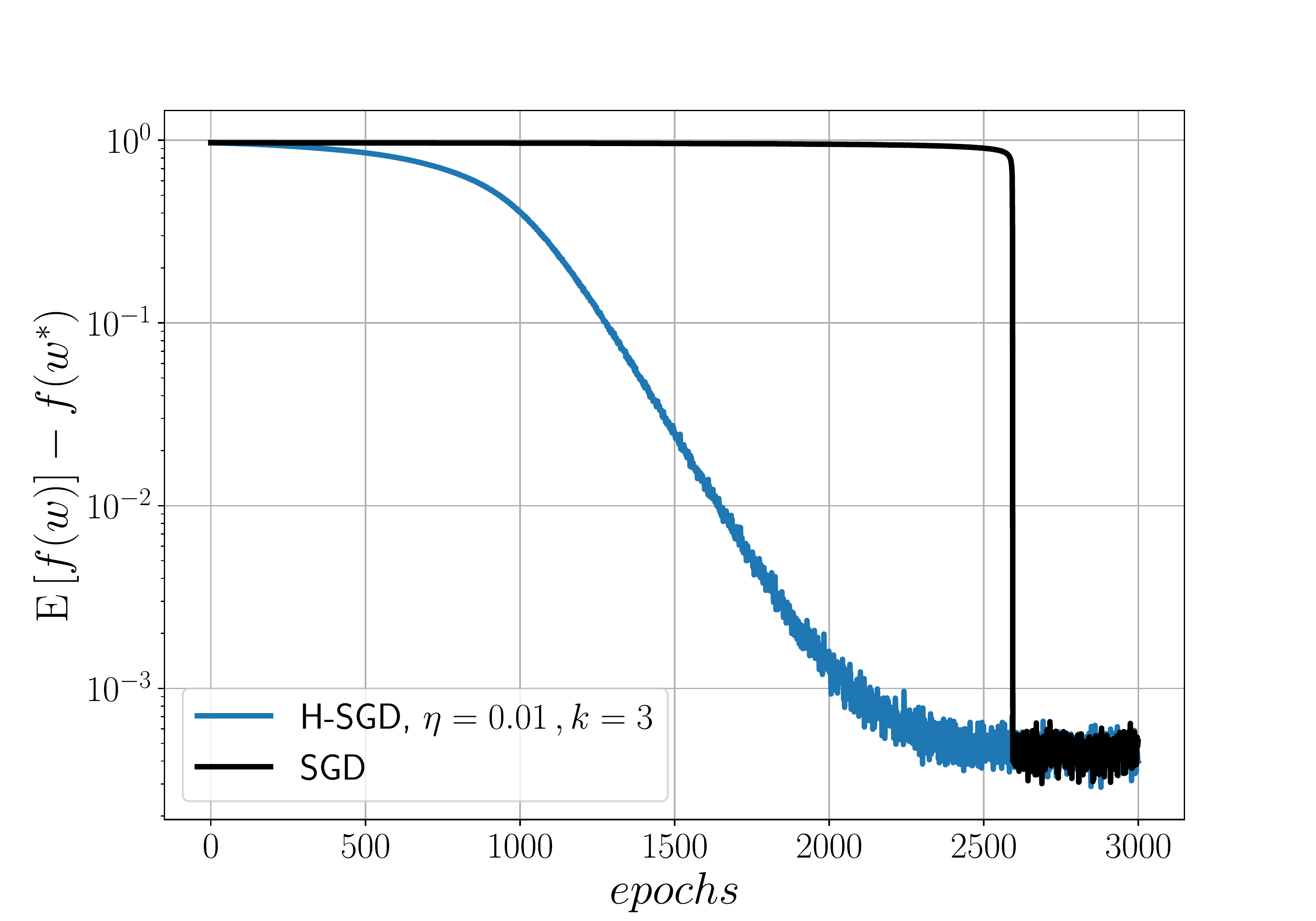}
    \caption{Expected optimality gap of H-SGD (blue) and SGD (black) averaged across 100 runs vs epochs for the toy-case described in Section~\ref{sec:toy_case}}\label{fig:toy_gap}
  \end{minipage}
  \hspace{.8cm}
  \begin{minipage}[t]{0.45\textwidth}
    \includegraphics[width=1.15\textwidth]{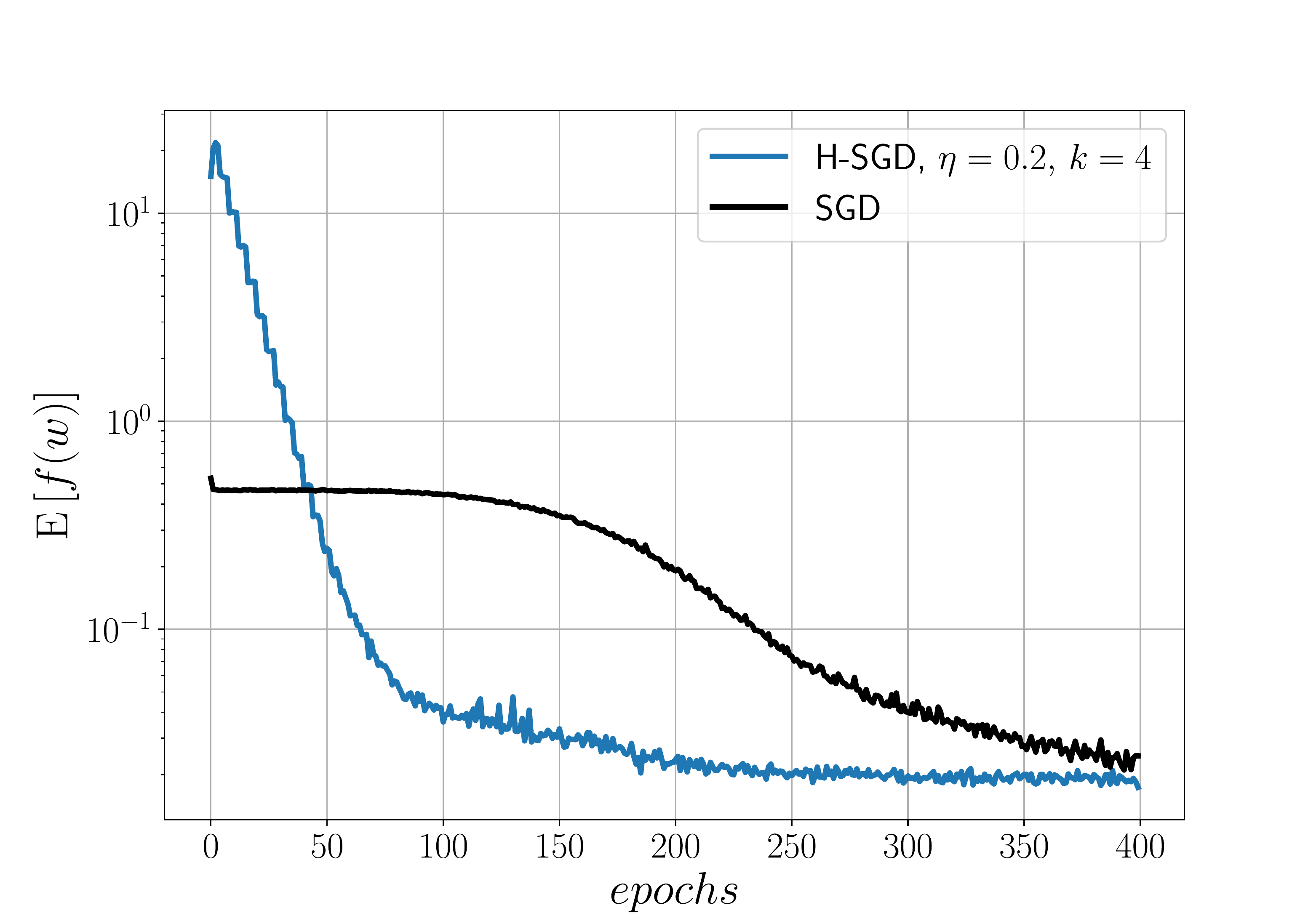}
    \caption{Expected loss of H-SGD (blue) and SGD (black) averaged across 100 runs vs epochs for the sine-wave regression case described in Section~\ref{sec:sine_wave}}\label{fig:sin_gap}
  \end{minipage}
\end{figure}
\begin{figure}[!htbp]
  \centering
  \begin{minipage}[t]{0.45\textwidth}
    \includegraphics[width=1.15\textwidth]{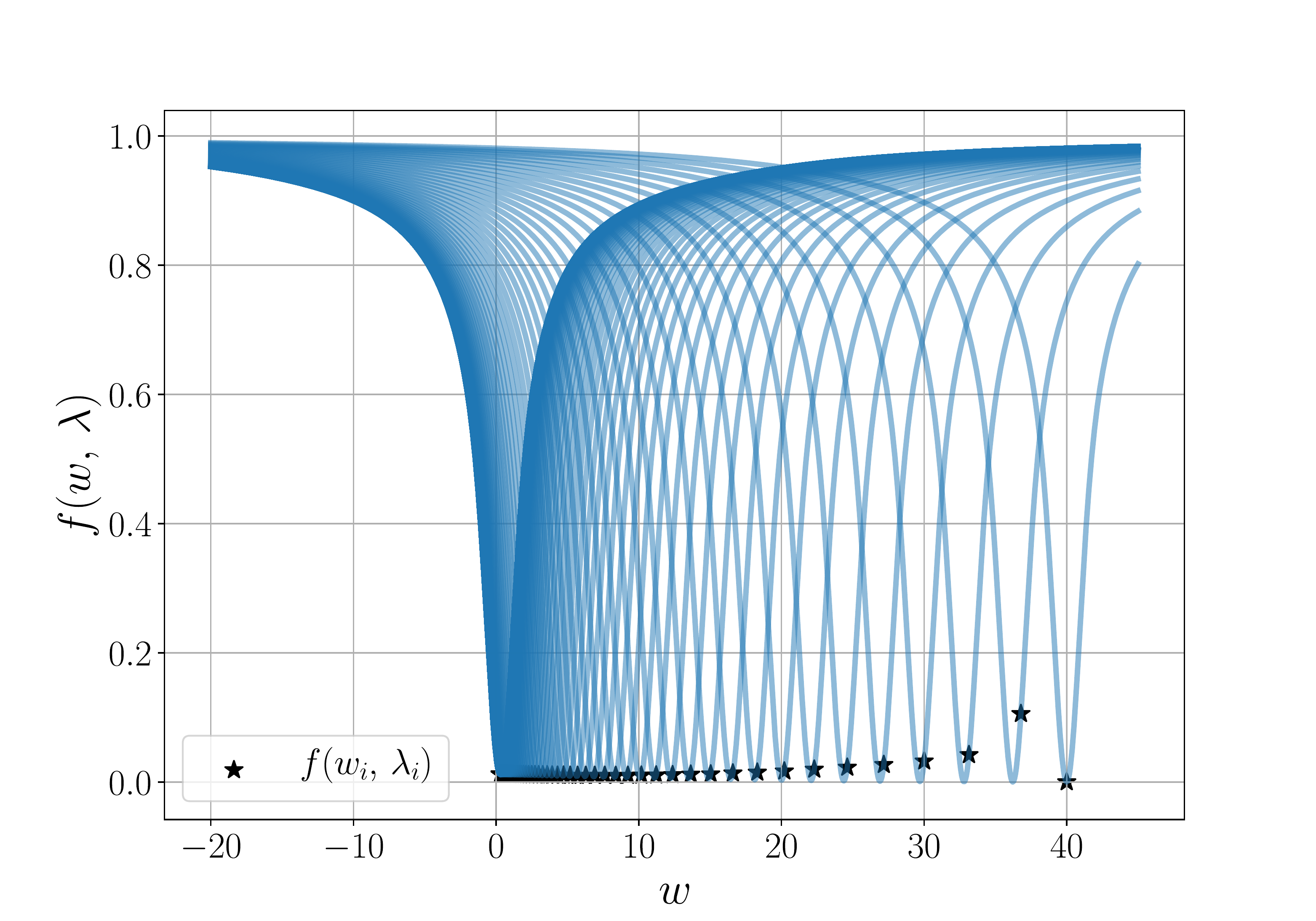}
    \caption{Visualization of the homotopy objective functions vs $w$ for different values of homotopy parameter. The black stars represent $(w_i,\,f(w_i,\,\lambda_i))$, i.e. the H-SGD iterates with associated objective function value.}\label{fig:toy_hsgd_iterates}
  \end{minipage}
  \hspace{.8cm}
  \begin{minipage}[t]{0.45\textwidth}
    \includegraphics[width=1.15\textwidth]{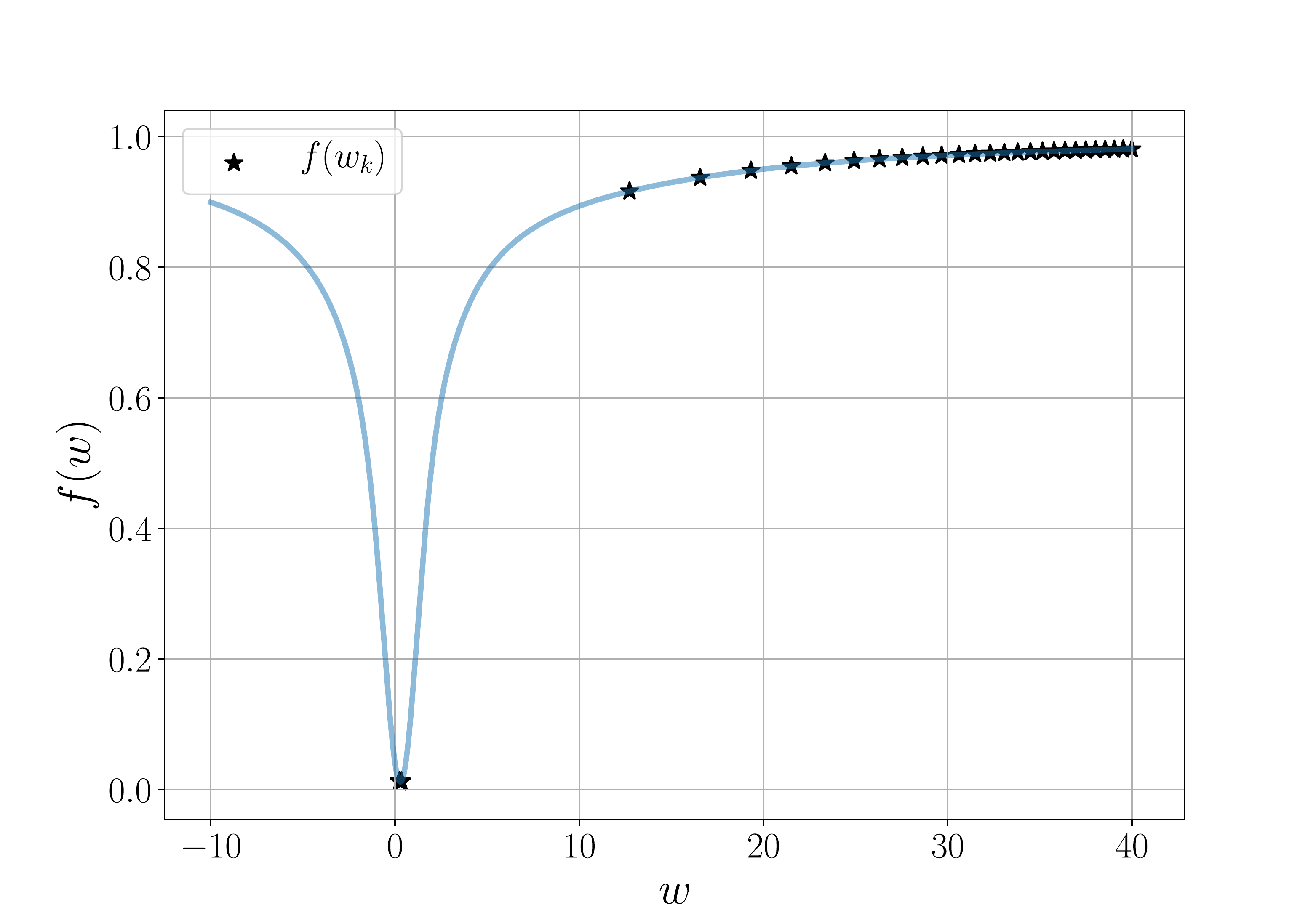}
    \caption{Visualization of the target objective functions vs $w$. The black stars represent $(w_k,\,f(w_k))$, i.e. the SGD iterates with associated objective function value.}\label{fig:toy_sgd_iterates}
  \end{minipage}
\end{figure}
\FloatBarrier

\begin{figure}[!htbp]
\vspace{2cm} 
\centering
\begin{tikzpicture}
\draw[->, thick] (2.6,2) --  (3.4,2);
\draw[thick] (4,2) circle  (0.6cm) node {$x$};
\draw[->, thick] (4.6,2) --  (5.4,2);
\draw[thick] (6,2) circle (0.6cm)node {	$\mbox{\scriptsize\( %
\mathrm{erf}(w\cdot x) %
\)} $};
\draw[->, thick] (6.6,2) --  (7.4,2);
\draw[thick] (8,2) circle (0.6cm)node {$\hat{y}$};
\draw[->, thick] (8.6,2) --  (9.4,2);
\end{tikzpicture}
\caption{Graphical representation of the $1$-dimensional neural network deployed for the toy-case experiment described in Section~\ref{sec:toy_case}.} \label{fig:toy_net}
\end{figure}
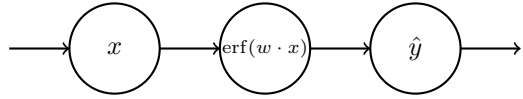
\FloatBarrier

\begin{figure}[!htbp]
  \centering
  \begin{minipage}[t]{0.5\linewidth}
    \centering
    \includegraphics[width=\linewidth]{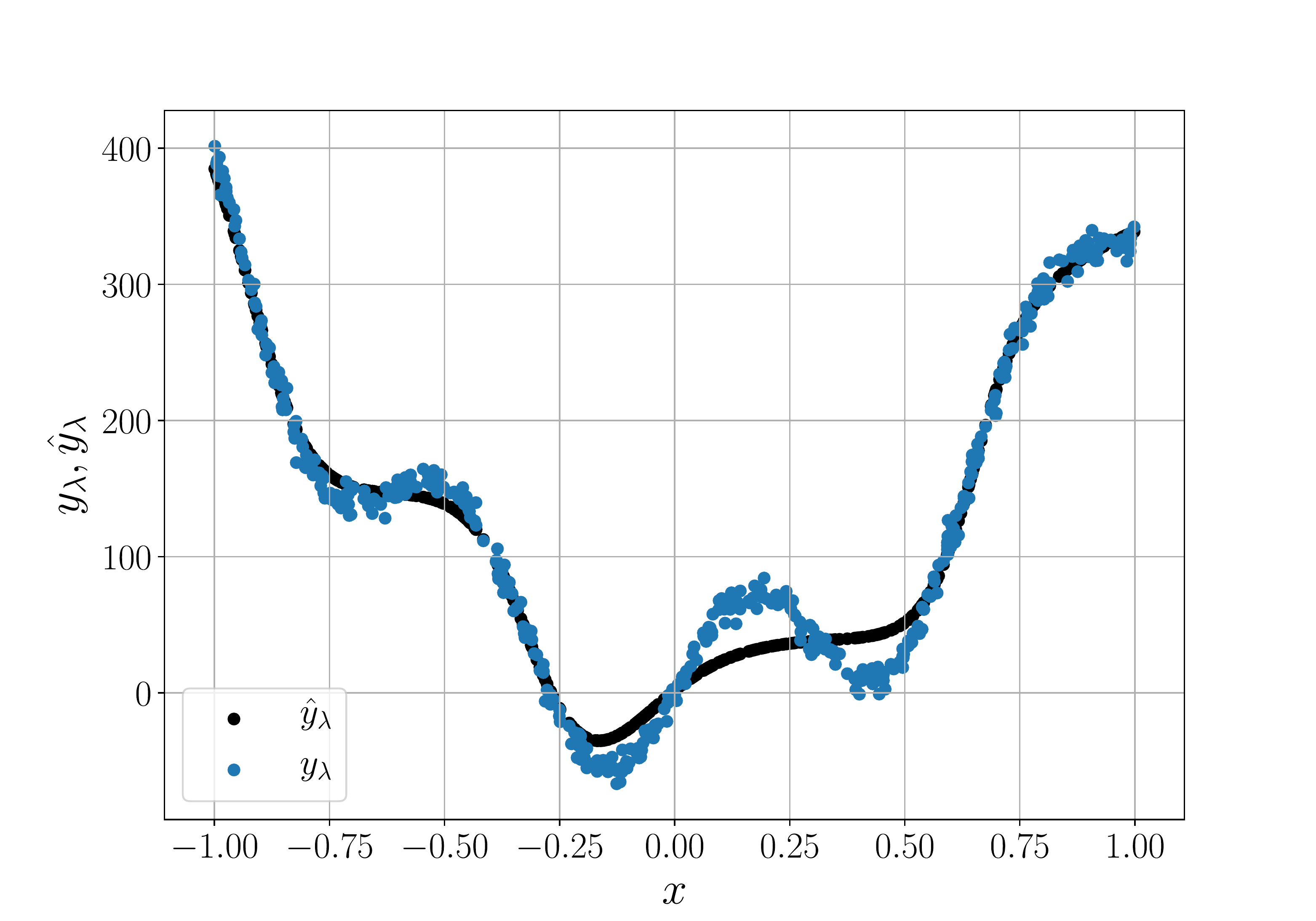} 
    \subcaption{$\lambda=0.63$} 
    \vspace{4ex}
  \end{minipage}
  \begin{minipage}[t]{0.5\linewidth}
    \centering
    \includegraphics[width=\linewidth]{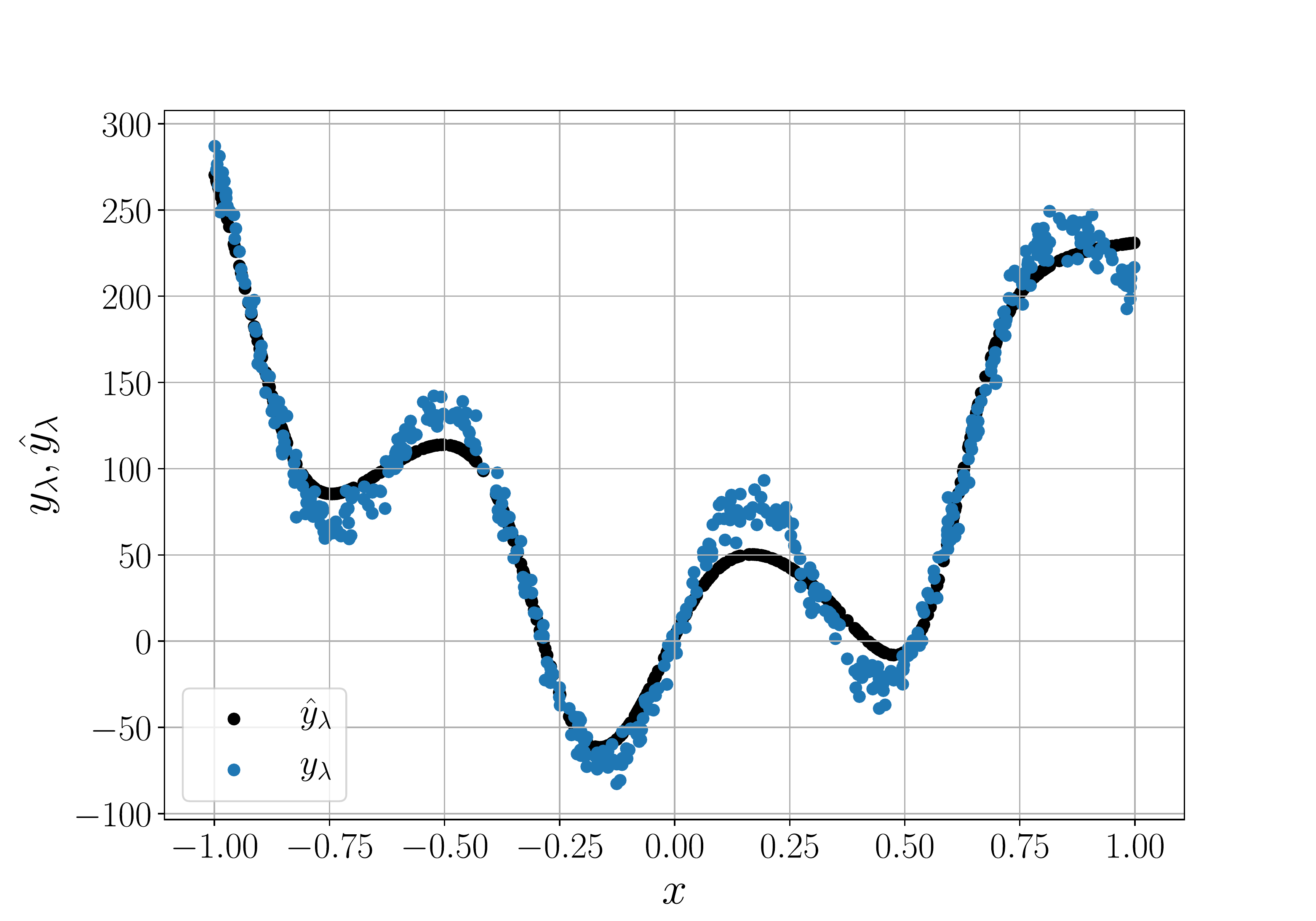} 
    \subcaption{$\lambda=0.75$} 
    \vspace{4ex}
  \end{minipage} 
  \begin{minipage}[t]{0.5\linewidth}
    \centering
    \includegraphics[width=\linewidth]{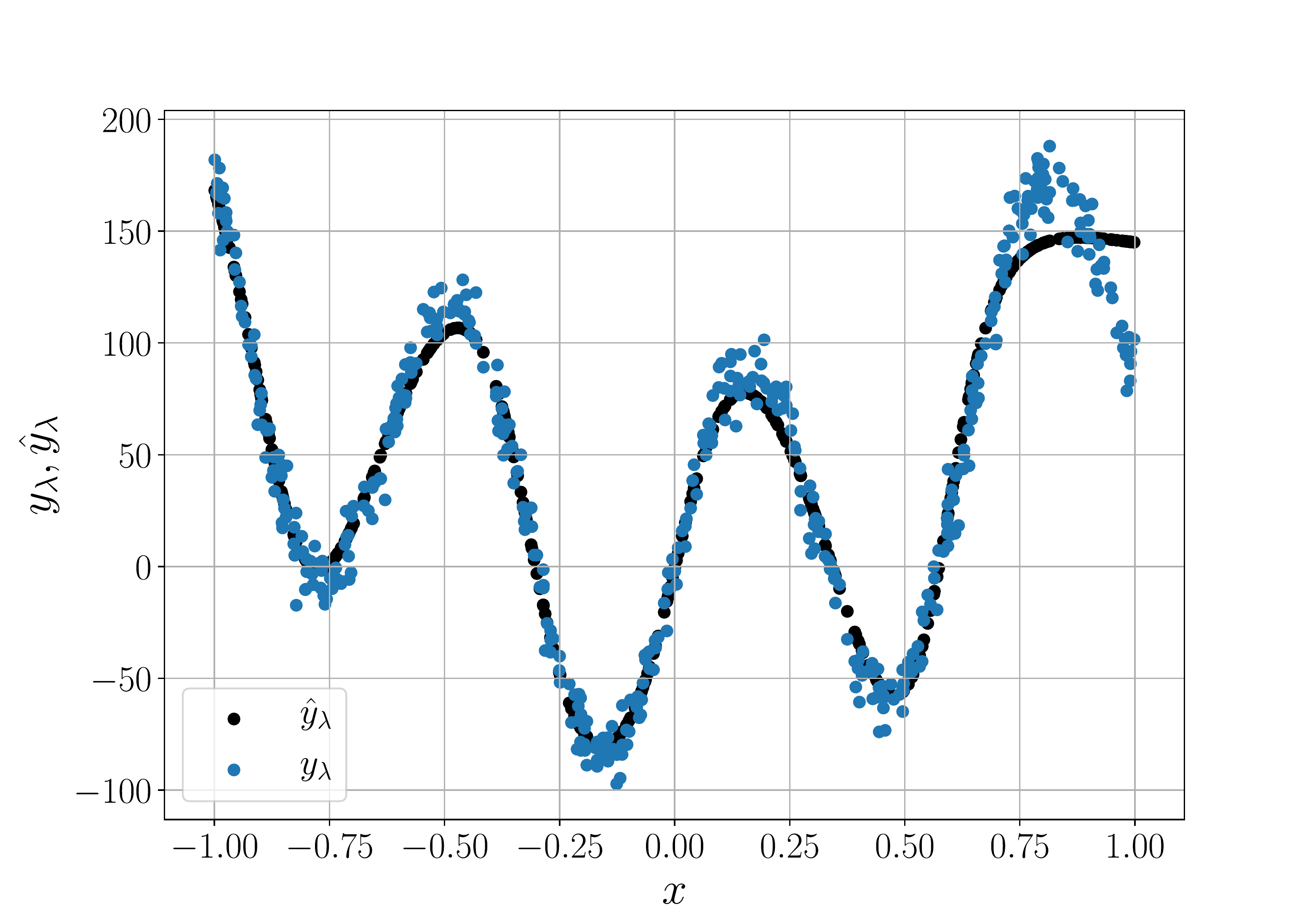} 
    \subcaption{$\lambda=0.86$} 
    \vspace{4ex}
  \end{minipage}
  \begin{minipage}[t]{0.5\linewidth}
    \centering
    \includegraphics[width=\linewidth]{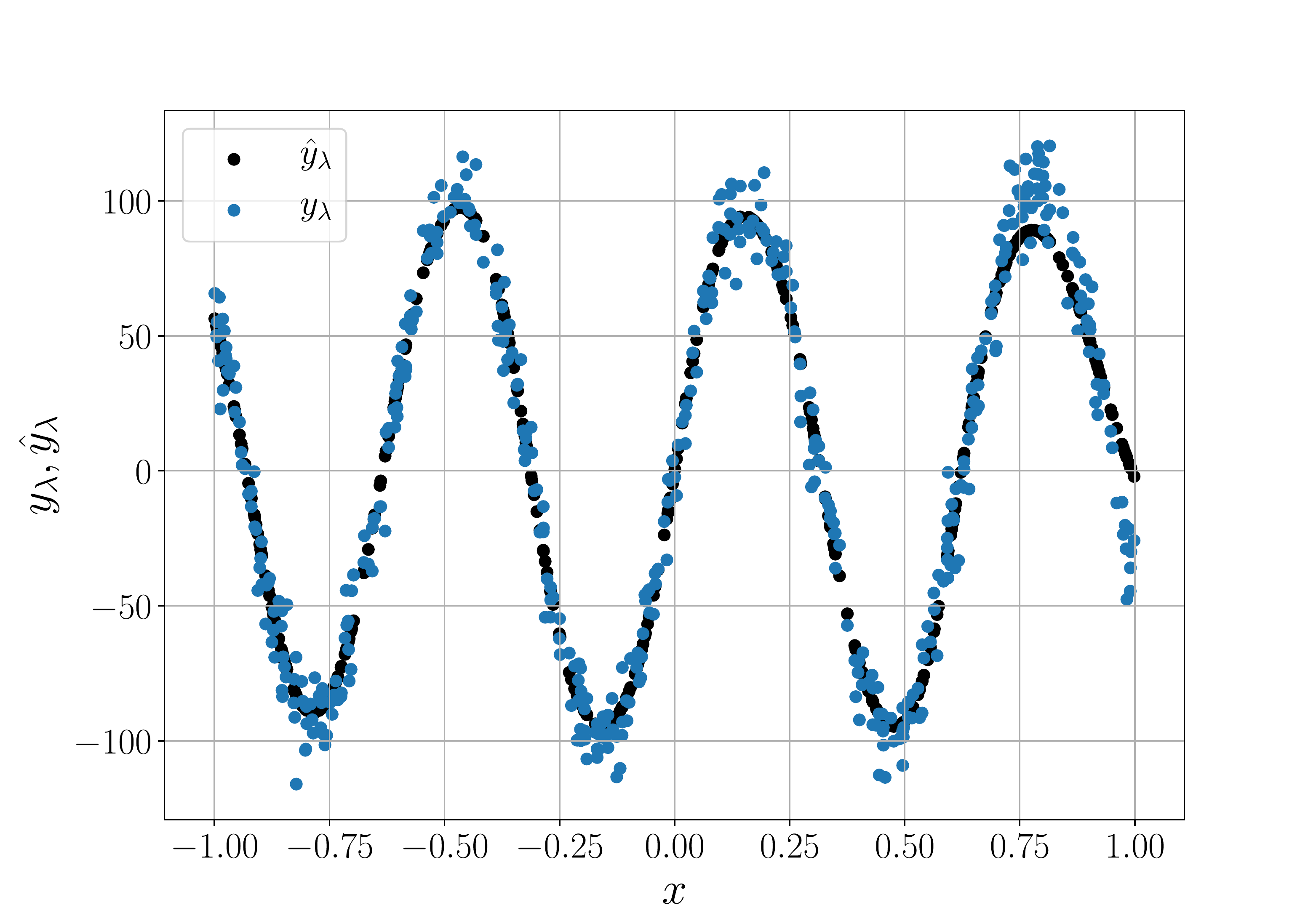} 
    \subcaption{$\lambda=0.98$} 
    \vspace{4ex}
  \end{minipage} 
  \caption{Predicted values $\hat{y}_{\lambda}$ (blue) vs true values $y_{\lambda}$ (black) for different values of $\lambda$ generated by using H-SGD.}\label{fig:hsgd_pred_lmbd}
\end{figure}
\FloatBarrier

\begin{figure}[!htbp]  
    \centering
    \includegraphics[width=\linewidth]{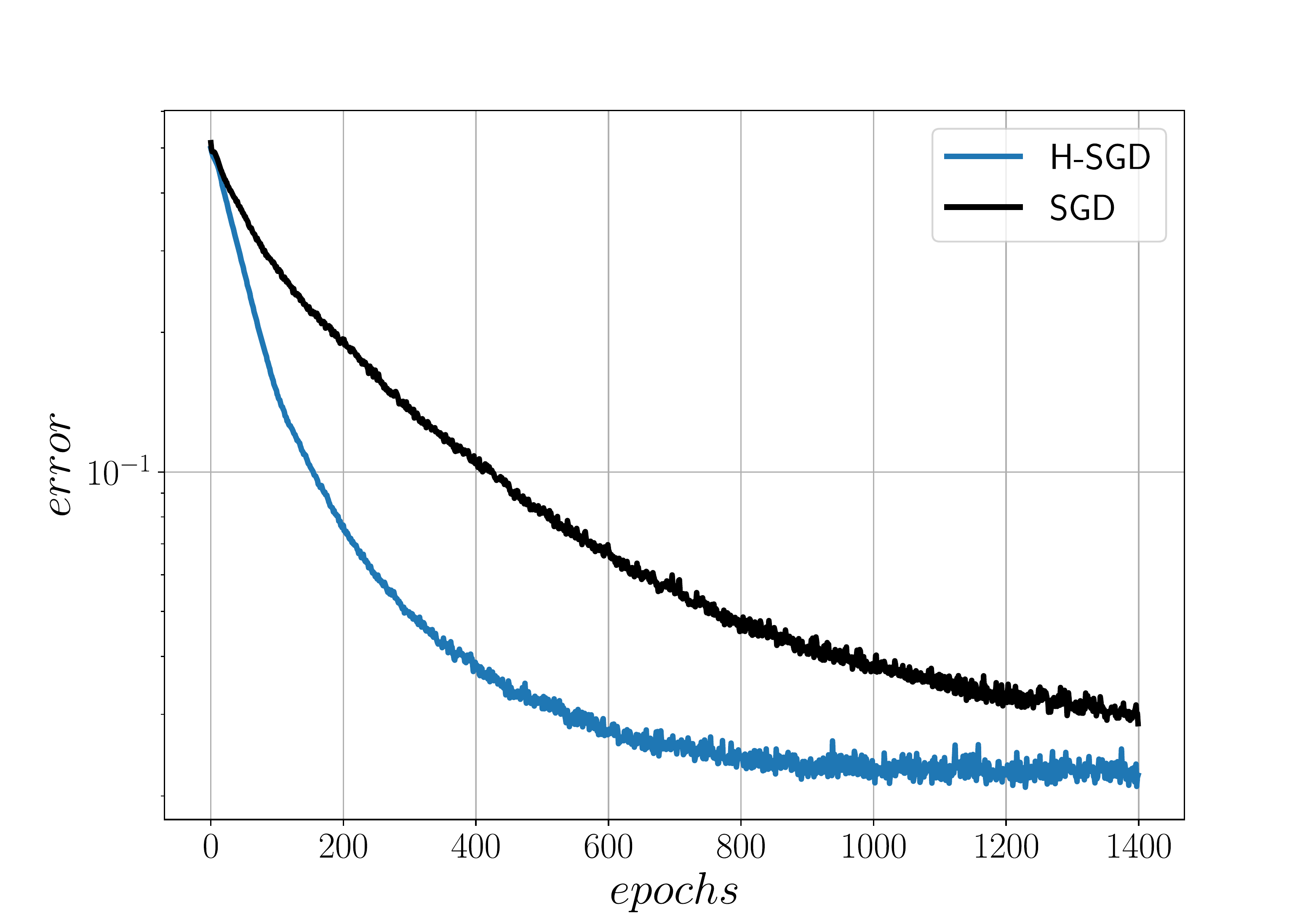} 
  	\caption{Error averaged across 100 runs of H-SGD (blue) and SGD (black) vs epochs for the classification task described in Section~\ref{sec:binary_classification}.}\label{fig:hsgd_sgd_classification}
\end{figure}
\FloatBarrier

\newpage
\section{Additional Remarks}\label{sec:additional_remarks}

\begin{remark}\label{app:remark}
Assumption~\ref{ass: regularity_1} can be read as a local Lipschitz continuity of $f$ with respect to its second argument.
Regarding the second regularity assumption (Assumption~\ref{ass: regularity_2}),~\eqref{eq:reg_2} relates the variation of the optimal objective function value   for the subset of solutions introduced in Assumption~\ref{ass: regularity_0} across homotopy iterations with the variations of the homotopy parameter. When the solution localization of the solution map is vector-valued, i.e. $w^*(\lambda)\equiv W^*(\lambda)$ (this is the case for instance when the Hessian of the objective function is positive definite at those points), this assumption can also be derived directly by combining Assumption~\ref{ass: regularity_1} with the following Lipschitz continuity requirements:
\begin{itemize}
\item Let $\kappa_1>0$, we assume that
\begin{equation}\label{eq:requirement_1}
\vert f(w^*(\tilde{\lambda}),\,\hat{\lambda})- f(w^*(\hat{\lambda}),\,\hat{\lambda}) \vert \leq \kappa_1 \Vert w^*(\tilde{\lambda}) - w^*(\hat{\lambda}) \Vert,\quad \forall \tilde{\lambda},\,\hat{\lambda}\in [0,1]^z\,.
\end{equation}
\item Let $\kappa_2>0$, we assume that
\begin{equation}\label{eq:requirement_2}
\Vert w^*(\tilde{\lambda})-w^*(\hat{\lambda}) \Vert \leq \kappa_2\Vert \tilde{\lambda} - \hat{\lambda} \Vert,\quad \forall \tilde{\lambda},\,\hat{\lambda}\in[0,1]^z\,.
\end{equation}
\end{itemize}
In particular, by combining Inequalities~\eqref{eq:requirement_1} and~\eqref{eq:requirement_2}, we obtain
\begin{equation}\label{eq:requirement}
\vert f(w^*(\tilde{\lambda}),\,\hat{\lambda})- f(w^*(\hat{\lambda}),\,\hat{\lambda}) \vert \leq \kappa_2\,\kappa_1 \Vert \tilde{\lambda} - \hat{\lambda} \Vert,\quad \forall \tilde{\lambda},\,\hat{\lambda}\in [0,1]^z\,.
\end{equation}  
To recover Inequality~\eqref{eq:reg_2} where $\gamma\coloneqq \delta+ \kappa_1\,\kappa_2$, we use Inequalities~\eqref{eq:reg_1} and~\eqref{eq:requirement} together with the triangle inequality as follows
\begin{equation}
\begin{aligned}
\vert f(w^*(\tilde{\lambda}), \tilde{\lambda}) - f(w^*(\hat{\lambda}), \hat{\lambda}) \vert &= \vert f(w^*(\tilde{\lambda}), \tilde{\lambda}) - f(w^*(\tilde{\lambda}), \hat{\lambda}) + f(w^*(\tilde{\lambda}), \hat{\lambda}) - f(w^*(\hat{\lambda}), \hat{\lambda}) \vert\\
& \leq  \vert f(w^*(\tilde{\lambda}), \tilde{\lambda}) - f(w^*(\tilde{\lambda}), \hat{\lambda})\vert + \vert f(w^*(\tilde{\lambda}), \hat{\lambda}) - f(w^*(\hat{\lambda}), \hat{\lambda}) \vert\\
&\leq (\delta + \kappa_1\kappa_2)\Vert \tilde{\lambda}-\hat{\lambda} \Vert \,.
\end{aligned}
\end{equation}

Notice that the Assumption in~\eqref{eq:requirement_1} is a less restrictive condition of the following general Lipschitz continuity requirement
\begin{equation}
\vert f(v, \hat{\lambda}) - f(w, \hat{\lambda}) \vert \leq C \Vert v-w \Vert,\quad \forall v,\,w\in \mathbb{R}^d,\,\, \forall \hat{\lambda}\in[0,1]^z\,,
\end{equation}
where $C>0$.
\end{remark}

\begin{remark}\label{remark_PL}
Assumption~\ref{ass: PL_condition} is a more general version of the classical PL condition~\citep[see Section 2][for more details on the classical PL condition]{10.1007/978-3-319-46128-1_50}. In particular, it is straightforward to observe that the classical PL condition implies Assumption~\ref{ass: PL_condition}, but not vice versa. See Figure~\ref{fig:PL_condition} in Section~\ref{app:additional_figures} of the Appendix for a graphical representation of a one dimensional example with $w_{i-1,t}\sim\mathcal{N}(0,1)$ where Assumption~\ref{ass: PL_condition} holds, while the classical PL condition does not. The expected value operates a smoothing of the function landscape resulting in convexity, while the original function shows many bumps that make it non-convex.
\end{remark}

\section{Proof of Proposition 3.7}\label{sec:proof_proposition}
\begin{proposition}\label{app_prop_1}
Consider $f(w,\,\lambda_i)$ with $\lambda_i \in [0,1]^z$ and let $w_{i-1,t}=w_{i-1,t}(\xi_{[i-1,t-1]})$ denote the iterate obtained at the $i$-th homotopy iteration by applying $t$ iterations of SGD with $t\leq k-1$ and $\alpha\leq \frac{1}{L}$. Under Assumptions~\ref{ass: L_smoothness},~\ref{ass: bounded_Var} and~\ref{ass: PL_condition}, if $\mathrm{E}_{\xi_{[i-1,t-1]}} \left[ f(w_{i-1,t}, \lambda_i) \right] -f^*(\lambda_i) \leq B$, then $\mathrm{E}_{\xi_{[i-1,t]}} \left[ f(w_{i-1,t+1}, \lambda_i) \right] -f^*(\lambda_i) \leq B$.
\end{proposition}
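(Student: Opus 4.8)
The plan is to establish a one-step stochastic descent recursion for the scalar suboptimality $a_t \coloneqq \mathrm{E}_{\xi_{[i-1,t-1]}}\left[f(w_{i-1,t},\lambda_i)\right] - f^*(\lambda_i)$, and then to show that the hypothesis $a_t \leq B$, combined with the standing requirement $B > \frac{\sigma^2}{2\mu}$ built into Assumption~\ref{ass: PL_condition}, forces $a_{t+1} \leq B$. First I would invoke the descent lemma implied by $L$-smoothness (Assumption~\ref{ass: L_smoothness}): writing the SGD step as $w_{i-1,t+1} = w_{i-1,t} - \alpha\, g(w_{i-1,t},\xi^i_t,\lambda_i)$ and abbreviating $g \coloneqq g(w_{i-1,t},\xi^i_t,\lambda_i)$, smoothness gives
\[
f(w_{i-1,t+1},\lambda_i) \leq f(w_{i-1,t},\lambda_i) - \alpha \langle \nabla_w f(w_{i-1,t},\lambda_i),\, g\rangle + \frac{L\alpha^2}{2}\Vert g\Vert^2.
\]

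Next I would take the conditional expectation over the fresh noise $\xi^i_t$, using unbiasedness $\mathrm{E}_{\xi^i_t}[g] = \nabla_w f(w_{i-1,t},\lambda_i)$ and the variance bound $\mathrm{E}_{\xi^i_t}\left[\Vert g\Vert^2\right] = \Vert \nabla_w f\Vert^2 + \mathrm{E}_{\xi^i_t}\left[\Vert g - \nabla_w f\Vert^2\right] \leq \Vert \nabla_w f\Vert^2 + \sigma^2$, both from Assumption~\ref{ass: bounded_Var}, to obtain
\[
\mathrm{E}_{\xi^i_t}\left[f(w_{i-1,t+1},\lambda_i)\right] \leq f(w_{i-1,t},\lambda_i) - \alpha\Big(1 - \tfrac{L\alpha}{2}\Big)\Vert \nabla_w f(w_{i-1,t},\lambda_i)\Vert^2 + \frac{L\alpha^2\sigma^2}{2}.
\]
Taking the outer expectation over the history $\xi_{[i-1,t-1]}$ and using $\alpha \leq \frac{1}{L}$ to bound $\alpha(1-\tfrac{L\alpha}{2}) \geq \tfrac{\alpha}{2}$ yields
\[
a_{t+1} \leq a_t - \frac{\alpha}{2}\,\mathrm{E}_{\xi_{[i-1,t-1]}}\left[\Vert \nabla_w f(w_{i-1,t},\lambda_i)\Vert^2\right] + \frac{L\alpha^2\sigma^2}{2}.
\]
This is where the ``expected'' PL condition (Assumption~\ref{ass: PL_condition}) enters: since the hypothesis $a_t \leq B$ is exactly the condition that licenses it, I may replace the gradient-norm term by $2\mu\, a_t$, giving the geometric recursion $a_{t+1} \leq (1-\alpha\mu)\,a_t + \frac{L\alpha^2\sigma^2}{2}$.

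Finally I would close the invariance argument: substituting $a_t \leq B$ into the right-hand side gives $a_{t+1} \leq (1-\alpha\mu)B + \frac{L\alpha^2\sigma^2}{2}$, which is at most $B$ precisely when $\frac{L\alpha\sigma^2}{2} \leq \mu B$; since $\alpha \leq \frac{1}{L}$ this reduces to $B \geq \frac{\sigma^2}{2\mu}$, which holds strictly by the standing assumption $B > \frac{\sigma^2}{2\mu}$. The step I expect to be the main obstacle is the careful ordering of the two expectations together with the nonstandard, averaged form of the PL inequality: the conditional expectation over the current noise must be taken first, to convert the stochastic inner product into $\Vert\nabla_w f\Vert^2$ and produce the $\Vert\nabla_w f\Vert^2 + \sigma^2$ second-moment bound, and only after taking the outer expectation over the history does the gradient-norm term appear as the full expectation $\mathrm{E}_{\xi_{[i-1,t-1]}}[\Vert\nabla_w f\Vert^2]$ to which Assumption~\ref{ass: PL_condition} applies — its very applicability being guaranteed by the inductive hypothesis $a_t \leq B$.
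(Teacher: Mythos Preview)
Your proposal is correct and follows essentially the same approach as the paper: the descent lemma from $L$-smoothness, conditional expectation plus the variance bound, the step-size simplification $\alpha\leq 1/L$, and then the expected PL inequality licensed by $a_t\leq B$. Your closing invariance check via the contraction $a_{t+1}\leq(1-\alpha\mu)B+\tfrac{L\alpha^2\sigma^2}{2}\leq B$ is in fact more explicit than the paper's, which argues only that the expected objective decreases whenever $a_t\geq\sigma^2/(2\mu)$ and then invokes $B>\sigma^2/(2\mu)$.
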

\begin{proof}
For ease of notation, we use the shorthands $w_{i-1,t}=w_{i-1,t}(\xi_{[i-1,t-1]})$ and $g_t= g(w_{i-1,t},\,\xi^i_t,\,\lambda_i)$.

Considering Assumption~\ref{ass: L_smoothness} together with the definition of SGD iterate and using Assumptions~\ref{ass: bounded_Var} and~\ref{ass: PL_condition}, we obtain the following inequalities
\begin{align*}
   & \mathrm{E}_{\xi_{[i-1,t]}} \left[ f(w_{i-1,t+1},\,\lambda_i) \right] \leq \mathrm{E}_{\xi_{[i-1,t]}}  \left[f(w_{i-1,t},\,\lambda_i) -\alpha \langle \nabla f(w_{i-1,t},\,\lambda_i), g_t \rangle + \frac{L\alpha^2}{2}\Vert g_t \Vert^2  \right] \\
   &\overset{\text{Law of Iterated Exp.}}{=}\mathrm{E}_{\xi_{[i-1,t-1]}}\left[ \mathrm{E}_{\xi^i_{t}}  \left[f(w_{i-1,t},\,\lambda_i) -\alpha \langle \nabla f(w_{i-1,t},\,\lambda_i), g_t \rangle + \frac{L\alpha^2}{2}\Vert g_t \Vert^2 \Big\vert \xi_{[i-1,t-1]} \right]\right] \\
    & \overset{\text{Assumption~\ref{ass: bounded_Var}}\quad}{\leq}  \mathrm{E}_{\xi_{[i-1,t-1]}} \left[ f(w_{i-1,t},\,\lambda_i) + \left(-\alpha+  \frac{L\alpha^2}{2} \right) \Vert \nabla f(w_{i-1,t},\,\lambda_i)\Vert^2  \right] + \frac{L\alpha^2\sigma^2}{2} \,.
\end{align*}
If $\alpha\leq \frac{1}{L}$, then
\begin{align}
    \mathrm{E}_{\xi_{[i-1,t]}} \left[ f(w_{i-1,t+1},\,\lambda_i)  \right] &\leq  \mathrm{E}_{\xi_{[i-1,t-1]}} \left[f(w_{i-1,t},\,\lambda_i)+ \left(-\frac{\alpha}{2} \right) \Vert \nabla f(w_{i-1,t}\,\lambda_i)\Vert^2  \right] + \frac{L\alpha^2\sigma^2}{2}\,.
\end{align}

We now make use of the ``expected'' PL condition and derive the following inequalities
\begin{equation}\label{eq:ineq_PL}
\begin{aligned}
&\mathrm{E}_{\xi_{[i-1,t]}}\left[ f(w_{i-1,t+1},\,\lambda_i) - f(w_{i-1,t},\,\lambda_i) \right] \leq -\frac{\alpha}{2}\mathrm{E}_{\xi_{[i-1,t-1]}}\left[  \Vert \nabla f(w_{i-1,t},\,\lambda_i) \Vert^2 \right] + \frac{L\alpha^2\sigma^2}{2} \\
&\overset{\text{Assumption~\ref{ass: PL_condition}}}{\leq} -\alpha\,\mu \left[  \mathrm{E}_{\xi_{[i-1,t-1]}}\left[  f(w_{i-1,t},\,\lambda_i) \right] - f^*(\lambda_i) \right] + \frac{L\alpha^2\sigma^2}{2}\,.\\ 
\end{aligned}
\end{equation}

From Inequality~\eqref{eq:ineq_PL} it follows that, whenever $\mathrm{E}_{\xi_{[i-1,t-1]}}\left[ f(w_{i-1,t},\,\lambda_i) \right] - f^*(\lambda_i) \geq \frac{\sigma^2}{2\mu}$, the objective function decreases in expectation, i.e. $\mathrm{E}_{\xi_{[i-1,t]}}\left[ f(w_{i-1,t+1},\,\lambda_i) - f(w_{i-1,t},\,\lambda_i) \right]\leq 0$. Given that by assumption $B> \frac{\sigma^2}{2\mu}$, we can consequently conclude that, if $\mathrm{E}_{\xi_{[i-1,t-1]}} \left[ f(w_{i-1,t}, \lambda_i) \right] -f(w_i^*, \lambda_i) \leq B$, then $\mathrm{E}_{\xi_{[i-1,t]}} \left[ f(w_{i-1,t+1}, \lambda_i) \right] -f^*(\lambda_i) \leq B$.
\end{proof}

\section{Proof of Theorem 3.8}\label{sec:proof_th_sgd}
\begin{theorem}\label{th_app_1}
Consider the minimization of $f(w,\,\lambda_i)$ with $\lambda_i \in [0,1]^z$ via SGD. Let $w_{i-1}=w_{i-1}(\xi_{[i-1]})$ be the random initial point associated with the $i$-th homotopy iteration with $\mathbb{E}_{\xi_{[i-1]}} \left[f(w_{i-1}, \lambda_i) \right] -f^*(\lambda_i) \leq B$ and $w_{i-1,t}=w_{i-1,t}(\xi_{[i-1,t-1]})$ denote the $t$-th SGD iterate with $t\leq k$. Under Assumptions~\ref{ass: L_smoothness},~\ref{ass: bounded_Var} and~\ref{ass: PL_condition}, SGD with a constant step-size $\alpha\leq \frac{1}{L}$ attains the following convergence rate to a minimizer's neighborhood
\begin{equation}\label{eq:convergence_SGD}
    \mathrm{E}_{\xi_{[i-1,t-1]}}\left[f(w_{i-1,t},\,\lambda_i)-f^*(\lambda_i) \right]\leq \rho^t \mathrm{E}_{\xi_{[i-1]}}\left[f(w_{i-1},\,\lambda_i)-f^*(\lambda_i) \right] + \frac{\sigma^2}{2\mu}\,, 
\end{equation}
with $\rho\coloneqq\left(1-\alpha\mu \right)$. 
With $\alpha=\frac{1}{L}$, we obtain $\rho=\left(1-\frac{\mu}{L} \right)$.
\end{theorem}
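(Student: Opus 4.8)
The plan is to promote the one-step descent inequality already obtained inside the proof of Proposition~\ref{app_prop_1} into a geometric recursion and then unroll it. For brevity, write
\[
\delta_t \coloneqq \mathrm{E}_{\xi_{[i-1,t-1]}}\left[ f(w_{i-1,t},\,\lambda_i) \right] - f^*(\lambda_i),
\]
so that $\delta_0 = \mathrm{E}_{\xi_{[i-1]}}\left[ f(w_{i-1},\,\lambda_i) \right] - f^*(\lambda_i)$ is the initial gap, which by hypothesis satisfies $\delta_0 \leq B$. The point that makes the whole argument legitimate is the forward invariance established in Proposition~\ref{app_prop_1}: because $\delta_0 \leq B$, applying that proposition inductively gives $\delta_t \leq B$ for every $t \leq k$, and hence the ``expected'' PL condition (Assumption~\ref{ass: PL_condition}) is available at each iterate visited along the trajectory.

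First I would recall from \eqref{eq:ineq_PL}, derived under $\alpha \leq \tfrac{1}{L}$ using Assumptions~\ref{ass: L_smoothness},~\ref{ass: bounded_Var} and~\ref{ass: PL_condition}, that
\[
\mathrm{E}_{\xi_{[i-1,t]}}\left[ f(w_{i-1,t+1},\,\lambda_i) - f(w_{i-1,t},\,\lambda_i) \right] \leq -\alpha\mu\,\delta_t + \frac{L\alpha^2\sigma^2}{2}.
\]
Since $f^*(\lambda_i)$ is a constant, subtracting it from both expectations and rearranging converts this into the scalar linear recursion
\[
\delta_{t+1} \leq (1-\alpha\mu)\,\delta_t + \frac{L\alpha^2\sigma^2}{2} = \rho\,\delta_t + \frac{L\alpha^2\sigma^2}{2}.
\]

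Next I would unroll this recursion by an elementary induction on $t$, obtaining $\delta_t \leq \rho^t \delta_0 + \tfrac{L\alpha^2\sigma^2}{2}\sum_{j=0}^{t-1}\rho^j$. Because $0 \leq \rho = 1-\alpha\mu < 1$, the geometric sum is controlled by $\sum_{j=0}^{t-1}\rho^j \leq \tfrac{1}{1-\rho} = \tfrac{1}{\alpha\mu}$, so the additive term is at most $\tfrac{L\alpha^2\sigma^2}{2}\cdot\tfrac{1}{\alpha\mu} = \tfrac{L\alpha\sigma^2}{2\mu}$; finally invoking $\alpha \leq \tfrac{1}{L}$ (hence $L\alpha \leq 1$) bounds this by $\tfrac{\sigma^2}{2\mu}$, which is exactly~\eqref{eq:convergence_SGD}. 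The stated specialization $\rho = 1-\tfrac{\mu}{L}$ follows immediately on setting $\alpha = \tfrac{1}{L}$.

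I expect the genuine subtlety—as opposed to the routine algebra—to lie entirely in justifying that the per-step inequality may be invoked at every $t \leq k$. This hinges on the invariance of the sublevel region $\{\delta \leq B\}$ guaranteed by Proposition~\ref{app_prop_1}: without it the ``expected'' PL condition, which Assumption~\ref{ass: PL_condition} only posits inside that region, could fail once an iterate leaves it, and the recursion would collapse. The growing conditioning on the history $\xi_{[i-1,t-1]}$ is handled transparently by the tower property of expectations, exactly as in the proof of Proposition~\ref{app_prop_1}.
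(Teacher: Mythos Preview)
Your proposal is correct and follows essentially the same route as the paper: derive the one-step descent inequality from $L$-smoothness, bounded variance, and the expected PL condition, convert it to the contraction $\delta_{t+1}\le\rho\,\delta_t+\tfrac{L\alpha^2\sigma^2}{2}$, unroll, and bound the geometric sum. Two minor differences are worth noting: the paper's appendix proof specializes to $\alpha=\tfrac{1}{L}$ before unrolling (so the additive term is $\tfrac{\sigma^2}{2L}$ per step), whereas you keep general $\alpha\le\tfrac{1}{L}$ throughout and only use $L\alpha\le 1$ at the end, which actually matches the theorem statement more faithfully; and you make explicit the appeal to Proposition~\ref{app_prop_1} for the invariance $\delta_t\le B$ that licenses Assumption~\ref{ass: PL_condition} at every step, a point the paper's own proof leaves implicit.
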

\begin{proof}
For ease of notation, we use the shorthands $w_{i-1,t} = w_{i-1,t}(\xi_{[i-1,t-1]})$ and $g_t = g(w_{i-1,t},\,\xi^i_t,\,\lambda_i)$.

By combining Assumption~\ref{ass: L_smoothness} with the definition of SGD iterate, for all $t=1,\dots,k-1$, we obtain the following bound on $f(w_{t+1},\,\lambda_i)$ 
\begin{align*}
    &f(w_{i-1,t+1},\,\lambda_i)\leq  f(w_{i-1,t},\,\lambda_i) + \langle\nabla f(w_{i-1,t},\,\lambda_i), w_{i-1,t+1}-w_{i-1,t} \rangle + \frac{L}{2} \Vert w_{i-1,t+1}-w_{i-1,t}\Vert^2\\
    & \overset{w_{i-1,t+1} \coloneqq w_{i-1,t} -\alpha g_t}{=} f(w_{i-1,t},\,\lambda_i)-\alpha \langle \nabla f(w_{i-1,t},\,\lambda_i), g_t \rangle + \frac{L\alpha^2}{2}\Vert g_t \Vert^2 \,.
\end{align*}
We now take the expectation with respect to all the sources of randomness involved and then we apply the law of iterated expectations, together with Assumption~\ref{ass: bounded_Var}
\begin{align*}
   & \mathrm{E}_{\xi_{[i-1,t]}} \left[ f(w_{i-1,t+1},\,\lambda_i) \right] \leq \mathrm{E}_{\xi_{[i-1,t]}}  \left[f(w_{i-1,t},\,\lambda_i) -\alpha \langle \nabla f(w_{i-1,t},\,\lambda_i), g_t \rangle + \frac{L\alpha^2}{2}\Vert g_t \Vert^2  \right] \\
   &\overset{\text{Law of Iterated Exp.}}{=}\mathrm{E}_{\xi_{[i-1,t-1]}}\left[ \mathrm{E}_{\xi^i_{t}}  \left[f(w_{i-1,t},\,\lambda_i) -\alpha \langle \nabla f(w_{i-1,t},\,\lambda_i), g_t \rangle + \frac{L\alpha^2}{2}\Vert g_t \Vert^2 \Big\vert \xi_{[i-1,t-1]}\right]\right] \\
    & \overset{\text{\quad Assumption~\ref{ass: bounded_Var}}}{\leq}  \mathrm{E}_{\xi_{[i-1,t-1]}} \left[ f(w_{i-1,t},\,\lambda_i) + \left(-\alpha+  \frac{L\alpha^2}{2} \right) \Vert \nabla f(w_{i-1,t},\,\lambda_i)\Vert^2  \right] + \frac{L\alpha^2\sigma^2}{2} \,.
\end{align*}
If $\alpha\leq \frac{1}{L}$, then
\begin{align}\label{eq:expected_decrease}
    \mathrm{E}_{\xi_{[i-1,t]}} \left[ f(w_{i-1,t+1},\,\lambda_i)  \right] &\leq  \mathrm{E}_{\xi_{[i-1,t-1]}} \left[f(w_{i-1,t},\,\lambda_i)+ \left(-\frac{\alpha}{2} \right) \Vert \nabla f(w_{i-1,t},\,\lambda_i)\Vert^2  \right] + \frac{L\alpha^2\sigma^2}{2}\,.
\end{align}
We now apply the PL condition to Inequality~\eqref{eq:expected_decrease}, and we obtain
\begin{equation}
\begin{aligned}
    \mathrm{E}_{\xi_{[i-1,t]}} \left[ f(w_{i-1,t+1},\,\lambda_i) \right] &\overset{\text{Assumption~\ref{ass: PL_condition}}}{\leq}  \mathrm{E}_{\xi_{[i-1,t-1]}} \left[f(w_{i-1,t},\,\lambda_i)\right.\\& \left. -\alpha\mu \right( f(w_{i-1,t},\,\lambda_i) - f^*(\lambda_i) \left)   \right] + \frac{L\alpha^2\sigma^2}{2}\,.
\end{aligned}
\end{equation}

By subtracting $ f^*(\lambda_i) $ on both sides and setting $\alpha=\frac{1}{L}$, we obtain the following inequality
\begin{align}\label{eq:ineq_recur}
    \mathrm{E}_{\xi_{[i-1,t]}} \left[ f(w_{i-1,t+1},\,\lambda_i) - f^*(\lambda_i) \right] &\leq \left(1-\frac{\mu}{L} \right)  \mathrm{E}_{\xi_{[i-1,t-1]}} \left[  f(w_{i-1,t},\,\lambda_i) - f^*(\lambda_i)    \right] + \frac{\sigma^2}{2L}\,.
\end{align}
By applying Inequality~\eqref{eq:ineq_recur} recursively, we derive the following bound
\begin{equation}
\begin{aligned}
    \mathrm{E}_{\xi_{[i-1,t-1]}} \left[ f(w_{i-1,t},\,\lambda_i) - f^*(\lambda_i) \right] &\leq \left(1-\frac{\mu}{L} \right)^k \mathrm{E}_{\xi_{[i-1]}} \left[   f(w_{i-1},\,\lambda_i) - f^*(\lambda_i)    \right]\\
    & \quad\quad\quad + \frac{\sigma^2}{2L}\sum_{j=0}^{k-1} \left(1-\frac{\mu}{L} \right)^j\,.
\end{aligned}
\end{equation}
Finally, by using the limit of geometric series, we obtain
\begin{align}
    \mathrm{E}_{\xi_{[i-1,t-1]}} \left[ f(w_{i-1,t},\,\lambda_i) - f^*(\lambda_i) \right] &\leq \left(1-\frac{\mu}{L} \right)^k \mathrm{E}_{\xi_{[i-1]}} \left[   f(w_{i-1},\,\lambda_i) - f^*(\lambda_i)    \right] + \frac{\sigma^2}{2\mu}\,.
\end{align}
\end{proof}

\section{Proof of Lemma 3.9}\label{sec:proof_lemma}
\begin{lemma}\label{app_lemma_1}
Assume $\Vert \lambda_{i+1} - \lambda_i \Vert \leq \epsilon$, $0\leq \epsilon < \frac{B}{\delta + \gamma}$ and let $w_i$ denote the $i$-th iterate of Algorithm~\ref{alg:hsgd} with $\alpha\leq \frac{1}{L}$. Under Assumptions~\ref{ass: L_smoothness}-~\ref{ass: regularity_2} and~\ref{ass: bounded_Var}-~\ref{ass: PL_condition}, if  $\phi_{w_i}(\lambda_i)\leq B-(\delta+\gamma)\epsilon$, then $\phi_{w_i}(\lambda_{i+1})\leq B$.
In addition, let 
\begin{equation}
k_{\max} \coloneqq \Bigg\lceil\log_{\rho}\left( 1 - \frac{2\mu(\delta + \gamma)\epsilon + \sigma^2}{2\mu B} \right)\Bigg\rceil\,.
\end{equation}  
If $\phi_{w_i}(\lambda_{i+1})\leq B$, $0\leq \epsilon< \frac{1}{\delta + \gamma}\left(B - \frac{\sigma^2}{2\mu} \right)$ and $k\geq k_{\max}$, then $\phi_{w_{i+1}}(\lambda_{i+1})\leq B-(\delta+\gamma)\epsilon$.
\end{lemma}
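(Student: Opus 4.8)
The plan is to prove the two claims separately, since the first is a purely static perturbation estimate in the homotopy parameter, while the second is a dynamic statement that feeds the SGD convergence rate from Theorem~\ref{theorem_SGD} into the same target quantity.

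For the first claim I would estimate the difference $\phi_{w_i}(\lambda_{i+1}) - \phi_{w_i}(\lambda_i)$ directly. Unfolding the definition of $\phi$, this difference equals $\mathrm{E}_{w_i}\left[ f(w_i,\lambda_{i+1}) - f(w_i,\lambda_i) \right] - \left( f^*(\lambda_{i+1}) - f^*(\lambda_i) \right)$. I would bound the first term with Assumption~\ref{ass: regularity_1}, which gives $|f(w_i,\lambda_{i+1}) - f(w_i,\lambda_i)| \leq \delta\|\lambda_{i+1}-\lambda_i\| \leq \delta\epsilon$ pointwise in $w_i$ and hence the same bound in expectation, and the second term with Assumption~\ref{ass: regularity_2}, which yields $\leq \gamma\epsilon$. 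Combining these through the triangle inequality gives $\phi_{w_i}(\lambda_{i+1}) \leq \phi_{w_i}(\lambda_i) + (\delta+\gamma)\epsilon$, and substituting the hypothesis $\phi_{w_i}(\lambda_i) \leq B - (\delta+\gamma)\epsilon$ collapses the right-hand side to $B$, as required.

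For the second claim I would apply Theorem~\ref{theorem_SGD} to the parametric problem at parameter $\lambda_{i+1}$ with $w_i$ as the random initial point; its precondition holds because $\phi_{w_i}(\lambda_{i+1}) \leq B$ by hypothesis. After $k$ SGD steps the theorem gives $\phi_{w_{i+1}}(\lambda_{i+1}) \leq \rho^k \phi_{w_i}(\lambda_{i+1}) + \frac{\sigma^2}{2\mu} \leq \rho^k B + \frac{\sigma^2}{2\mu}$. The desired inequality $\phi_{w_{i+1}}(\lambda_{i+1}) \leq B - (\delta+\gamma)\epsilon$ then reduces, after rearranging, to $\rho^k \leq 1 - \frac{2\mu(\delta+\gamma)\epsilon + \sigma^2}{2\mu B}$. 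Since $\rho \in (0,1)$, the map $\log_\rho(\cdot)$ is decreasing, so this is equivalent to $k \geq \log_\rho\big( 1 - \frac{2\mu(\delta+\gamma)\epsilon + \sigma^2}{2\mu B} \big)$, which the hypothesis $k \geq k_{\max}$ guarantees, the ceiling absorbing the integrality of $k$.

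The point to handle with care — more a bookkeeping subtlety than a deep obstacle — is the passage to $\log_\rho$ and the well-definedness of that logarithm. Its argument $1 - \frac{2\mu(\delta+\gamma)\epsilon + \sigma^2}{2\mu B}$ must be strictly positive, and this is exactly where the hypothesis $\epsilon < \frac{1}{\delta+\gamma}\big(B - \frac{\sigma^2}{2\mu}\big)$ enters: rearranging it yields $2\mu(\delta+\gamma)\epsilon + \sigma^2 < 2\mu B$, which is the positivity of the argument. I would also note the argument is strictly below $1$ (the subtracted term is positive because $B > \frac{\sigma^2}{2\mu}$), so $\log_\rho$ of it is positive and $k_{\max} \geq 1$, consistent with taking at least one SGD step. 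Finally, I would remark that Theorem~\ref{theorem_SGD} already incorporates, via Proposition~\ref{prop_1}, the fact that the iterates stay in the region where the expected PL condition applies throughout the $k$ steps, so no separate invariance argument is needed.
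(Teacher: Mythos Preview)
Your proposal is correct and follows essentially the same route as the paper: the paper likewise bounds $\phi_{w_i}(\lambda_{i+1})$ by adding and subtracting $f(w_i,\lambda_i)$ and $f^*(\lambda_i)$, invoking Assumptions~\ref{ass: regularity_1} and~\ref{ass: regularity_2} to pick up the $(\delta+\gamma)\epsilon$ perturbation term, and then feeds $\phi_{w_i}(\lambda_{i+1})\leq B$ into Theorem~\ref{theorem_SGD} and solves $\rho^k B + \frac{\sigma^2}{2\mu} \leq B - (\delta+\gamma)\epsilon$ for $k$. Your additional remarks on the positivity of the $\log_\rho$ argument and on Proposition~\ref{prop_1} handling the PL-region invariance are accurate and make explicit what the paper leaves implicit.
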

\begin{proof}
We start by deriving an upper bound on $\mathrm{E}_{\xi_{[i]}}\left[f(w_i,\,\lambda_{i+1}) \right]-f^*(\lambda_{i+1})$ with $\phi_{w_i}(\lambda_i)\leq B$ and $\Vert \lambda_{i+1}-\lambda_i \Vert\leq \epsilon$. For that we use the regularity Assumptions~\ref{ass: regularity_1} and~\ref{ass: regularity_2} together with the triangle and Jensen inequalities as follows
\begin{equation}\label{eq:ineq_1}
\begin{aligned}
& \mathrm{E}_{\xi_{[i]}}\left[f(w_i,\,\lambda_{i+1}) \right]-f^*(\lambda_{i+1}) = \vert\mathrm{E}_{\xi_{[i]}}\left[f(w_i,\,\lambda_{i+1}) \right]-f^*(\lambda_{i+1})\vert \\
& \overset{\phantom{\text{Triangle and Jensen Ineq.}}}{=}\vert\mathrm{E}_{\xi_{[i]}}\left[f(w_i,\,\lambda_{i+1}) + f(w_i,\,\lambda_i) - f(w_i,\,\lambda_i) \right]-f^*(\lambda_{i+1}) \\
& \phantom{\overset{\text{Triangle and Jensen Ineq.}}{=}} + f^*(\lambda_i) - f^*(\lambda_i)\vert\\
&\overset{\text{Triangle and Jensen Ineq.}}{\leq}\vert\mathrm{E}_{\xi_{[i]}}\left[ f(w_i,\,\lambda_i) \right] - f^*(\lambda_i)  \vert + \mathrm{E}_{\xi_{[i]}}\left[ \vert f(w_i,\,\lambda_{i+1})  - f(w_i,\,\lambda_i)\vert\right]\\
&\phantom{\overset{\text{Triangle and Jensen Ineq.}}{\leq}} + \vert f^*(\lambda_i) - f^*(\lambda_{i+1}) \vert\\
&\overset{\text{Assumptions~\ref{ass: regularity_1} and~\ref{ass: regularity_2}}}{\leq} \vert\mathrm{E}_{\xi_{[i]}}\left[f(w_i,\,\lambda_i) \right] - f^*(\lambda_i)\vert + (\delta+\gamma)\epsilon\,.
\end{aligned}
\end{equation}
From Inequality~\eqref{eq:ineq_1} it follows that, if $\phi_{w_i}(\lambda_i)\leq B-(\delta+\gamma)\epsilon$ with $\epsilon< \frac{B}{(\delta+\gamma)}$, then $\phi_{w_i}(\lambda_{i+1})\leq B$.

We now use the results of Theorem~\ref{theorem_SGD} to derive a lower bound on the number of SGD-steps such that, if $\phi_{w_i}(\lambda_{i+1})\leq B$, then $\phi_{w_{i+1}}(\lambda_{i+1})\leq B-(\delta+\gamma)\epsilon$. We start considering the following inequality

\begin{equation}\label{eq:ineq_2}
\begin{aligned}
\mathrm{E}_{\xi_{[i+1]}}\left[ f(w_{i+1},\,\lambda_{i+1}) \right] - f^*(\lambda_{i+1}) &\leq \rho^k \left[ \mathrm{E}_{\xi_{[i]}}\left[ f(w_i,\,\lambda_{i+1}) \right] - f^*(\lambda_{i+1}) \right] + \frac{\sigma^2}{2\mu}\\
&\leq \rho^k B + \frac{\sigma^2}{2\mu}\,.
\end{aligned}
\end{equation}
From Inequality~\eqref{eq:ineq_2} it follows that, if $\phi_{w_i}(\lambda_{i+1})\leq B$, then $\phi_{w_{i+1}}(\lambda_{i+1})\leq B-(\delta+\gamma)\epsilon$ with $\epsilon< \frac{1}{\delta+\gamma}\left(B-\frac{\sigma^2}{2\mu} \right)$ whenever
\begin{equation}\label{eq:ineq_3}
k \geq \Bigg\lceil\log_{\rho}\left( 1 - \frac{2\mu(\delta + \gamma)\epsilon + \sigma^2}{2\mu B} \right)\Bigg\rceil\,.
\end{equation} 
\end{proof}
\section{Proof of Theorem 3.10}\label{sec:proof_th_ot}
\begin{theorem}\label{app_th_2} 
Assume there exists $\frac{\sigma^2}{2\mu}< r\leq B$ and $\tilde{\epsilon} \coloneqq\min \left\{ \epsilon_1,\,\epsilon_2\right\}$ with
\begin{equation}
\epsilon_1\coloneqq \frac{1}{(\delta+\gamma)}(B-r),\quad\epsilon_2 \coloneqq \frac{(1-\rho^k)\, r-\sigma^2/2\mu}{\rho^k\,(\delta+\gamma)}\,.
\end{equation}

In addition, let
\begin{equation}
k_{\max}\coloneqq \Bigg\lceil \log_{\rho} \left( 1-\frac{\sigma^2}{2\mu r} \right)  \Bigg\rceil\,.
\end{equation}

Consider Algorithm~\ref{alg:hsgd} with $\alpha \leq \frac{1}{L}$, $k\geq k_{\max}$ and $\Vert \lambda_{i+1} - \lambda_i \Vert \leq \epsilon$, where $0\leq\epsilon\leq \tilde{\epsilon}$.

Under Assumptions~\ref{ass: L_smoothness}-~\ref{ass: regularity_2} and~\ref{ass: bounded_Var}-~\ref{ass: PL_condition}, if $\phi_{w_i}(\lambda_i)\leq r$, then $\phi_{w_{i+1}}(\lambda_{i+1})\leq r$.
\end{theorem}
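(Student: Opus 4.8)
The plan is to chain two effects: the perturbation of the optimality gap caused by increasing the homotopy parameter from $\lambda_i$ to $\lambda_{i+1}$, and the contraction of that gap produced by the $k$ inner SGD steps via Theorem~\ref{theorem_SGD}. This is exactly the induction step that, applied recursively from $\phi_{w_0}(\lambda_0)\leq r$, establishes optimality tracking.

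First I would bound the gap incurred by the parameter update. Precisely as in the first part of Lemma~\ref{lemma:initial_conditions}, inserting and subtracting $f(w_i,\lambda_i)$ and $f^*(\lambda_i)$ and applying the triangle and Jensen inequalities together with the regularity Assumptions~\ref{ass: regularity_1} and~\ref{ass: regularity_2} gives
\begin{equation}
\phi_{w_i}(\lambda_{i+1}) \leq \phi_{w_i}(\lambda_i) + (\delta+\gamma)\epsilon.
\end{equation}
Starting from the hypothesis $\phi_{w_i}(\lambda_i)\leq r$, this yields $\phi_{w_i}(\lambda_{i+1})\leq r + (\delta+\gamma)\epsilon$. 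Because $\epsilon \leq \epsilon_1 = \frac{1}{\delta+\gamma}(B-r)$, the right-hand side is at most $B$, so the starting condition $\phi_{w_i}(\lambda_{i+1})\leq B$ required to invoke the SGD convergence result is satisfied.

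Next I would run the inner loop. Applying Theorem~\ref{theorem_SGD} to the minimization of $f(\cdot,\lambda_{i+1})$ with initial point $w_i$ over $k$ steps gives
\begin{equation}
\phi_{w_{i+1}}(\lambda_{i+1}) \leq \rho^k\,\phi_{w_i}(\lambda_{i+1}) + \frac{\sigma^2}{2\mu} \leq \rho^k\bigl(r+(\delta+\gamma)\epsilon\bigr) + \frac{\sigma^2}{2\mu}.
\end{equation}
It then remains to verify that this upper bound does not exceed $r$. Rearranging the target inequality $\rho^k\bigl(r+(\delta+\gamma)\epsilon\bigr)+\frac{\sigma^2}{2\mu}\leq r$ isolates the admissible step as $\epsilon \leq \frac{(1-\rho^k)\,r - \sigma^2/2\mu}{\rho^k(\delta+\gamma)} = \epsilon_2$, which holds by the hypothesis $\epsilon\leq\tilde{\epsilon}=\min\{\epsilon_1,\epsilon_2\}$. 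This closes the step $\phi_{w_i}(\lambda_i)\leq r \Rightarrow \phi_{w_{i+1}}(\lambda_{i+1})\leq r$.

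The one point requiring genuine care — and where $k\geq k_{\max}$ enters — is ensuring that the constraint $\epsilon\leq\epsilon_2$ is non-vacuous, i.e. that $\epsilon_2\geq 0$. Since $\rho\in(0,1)$, the numerator $(1-\rho^k)\,r-\sigma^2/2\mu$ is nonnegative exactly when $\rho^k\leq 1-\frac{\sigma^2}{2\mu r}$; taking $\log_{\rho}$, which reverses the inequality because $\rho<1$, converts this into $k\geq\log_{\rho}\bigl(1-\frac{\sigma^2}{2\mu r}\bigr)$, and defining $k_{\max}$ as the ceiling of this quantity guarantees it. The hypothesis $r>\frac{\sigma^2}{2\mu}$ ensures the logarithm's argument is positive, so $k_{\max}$ is well-defined. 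I expect the bookkeeping of these sign and positivity conditions, rather than any single inequality, to be the main subtlety, since everything else is a direct assembly of the regularity bound and Theorem~\ref{theorem_SGD}.
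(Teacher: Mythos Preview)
Your proposal is correct and follows essentially the same approach as the paper: both first use the regularity Assumptions~\ref{ass: regularity_1}--\ref{ass: regularity_2} to bound $\phi_{w_i}(\lambda_{i+1})$ by $\phi_{w_i}(\lambda_i)+(\delta+\gamma)\epsilon$ (so that $\epsilon\leq\epsilon_1$ ensures the initial condition $\leq B$ for Theorem~\ref{theorem_SGD}), then apply the SGD contraction to arrive at the same inequality $\rho^k r+\rho^k(\delta+\gamma)\epsilon+\sigma^2/2\mu\leq r$, which is solved by $\epsilon\leq\epsilon_2$ provided $k\geq k_{\max}$. Your explicit treatment of why $k\geq k_{\max}$ guarantees $\epsilon_2\geq 0$ is slightly more detailed than the paper's, but the argument is the same.
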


\begin{proof}
We consider $\phi_{w_i}(\lambda_i)\leq r$. If $\epsilon\leq \frac{1}{(\delta + \gamma)}(B-r)$ with $r\leq B$, then $\phi_{w_i}(\lambda_i)\leq B-(\delta+\gamma)\epsilon$ and, as shown in Lemma~\ref{lemma:initial_conditions}, $\phi_{w_i}(\lambda_{i+1})\leq B$. This allows us to use the results of Theorem~\ref{theorem_SGD}.

We now derive an upper bound on $\mathrm{E}_{\xi_{[i+1]}}\left[ f(w_{i+1},\,\lambda_{i+1}) \right] - f^*(\lambda_{i+1})$ by considering the results of Theorem~\ref{theorem_SGD} together with the regularity Assumptions~\ref{ass: regularity_1} and~\ref{ass: regularity_2}, and the triangle and Jensen inequalities as follows
\begin{equation}
\begin{aligned}
&\mathrm{E}_{\xi_{[i+1]}}\left[ f(w_{i+1},\,\lambda_{i+1}) \right] - f^*(\lambda_{i+1}) \leq \rho^k\left[ \mathrm{E}_{\xi_{[i]}}\left[ f(w_i,\,\lambda_{i+1}) \right] - f^*(\lambda_{i+1})   \right] + \frac{\sigma^2}{2\mu}\\
&\overset{\phantom{\text{Triangle and Jensen Ineq.}}}{=} \rho^k \Big\vert  \mathrm{E}_{\xi_{[i]}}\left[ f(w_i,\,\lambda_{i+1}) + f(w_i,\,\lambda_i) - f(w_i,\,\lambda_i) \right] - f^*(\lambda_{i+1}) \\
&\phantom{\overset{\text{Triangle and Jensen Ineq.}}{=}} + f^*(\lambda_i) - f^*(\lambda_i)   \Big\vert + \frac{\sigma^2}{2\mu}  \\
&\overset{\text{Triangle and Jensen Ineq.}}{\leq} \rho^k \left[ \mathrm{E}_{\xi_{[i]}}\left[ f(w_i,\,\lambda_i)  \right] - f^*(\lambda_i)  \right] + \rho^k \mathrm{E}_{\xi_{[i]}}\left[\vert f(w_i,\,\lambda_{i+1}) - f(w_{i},\,\lambda_{i})  \vert  \right]   \\
&\phantom{\overset{\text{Triangle and Jensen Ineq.}}{=}\,}+ \rho^k\vert f^*(\lambda_i) - f^*(\lambda_{i+1}) \vert + \frac{\sigma^2}{2\mu}  \\
&\overset{\text{Assumptions~\ref{ass: regularity_1} and~\ref{ass: regularity_2}}}{\leq} \rho^k \left[ \mathrm{E}_{\xi_{[i]}}\left[ f(w_i,\,\lambda_i)  \right] - f^*(\lambda_i)  \right] + \rho^k (\delta + \gamma) \Vert \lambda_{i+1}-\lambda_i \Vert + \frac{\sigma^2}{2\mu}\,. \\
\end{aligned}
\end{equation}
Using the fact that $\phi_{w_i}(\lambda_i)\leq r$ and that $\Vert \lambda_{i+1}-\lambda_i \Vert\leq \epsilon$, we now solve the following inequality for $\epsilon$ in order to find an upper bound on the variation of the homotopy parameter such that $\phi_{w_{i+1}}(\lambda_{i+1})\leq r $
\begin{equation}\label{eq:ineq_4}
\rho^k\, r + \rho^k\,(\delta + \gamma)\,\epsilon + \frac{\sigma^2}{2\mu}\,{\leq}\, r\,.
\end{equation} 
Inequality~\eqref{eq:ineq_4} holds whenever
\begin{equation}
k \geq \Bigg\lceil \log_{\rho} \left( 1-\frac{\sigma^2}{2\mu r} \right)  \Bigg\rceil \,,
\end{equation}
and
\begin{equation}
\epsilon \leq \frac{(1-\rho^k)\,r - \sigma^2/2\mu}{\rho^k\,(\delta + \gamma)}\,,
\end{equation}
with $r > \frac{\sigma^2}{2\mu}$.
\end{proof}
\section{Proof of Theorem 3.11}\label{sec:proof_th_lc}
\begin{theorem}\label{th:app_lc}
Let $\tilde{\rho}\in\left( 1-\frac{\sigma^2}{2\mu}\frac{1}{B},1 \right)$ and consider Algorithm~\ref{alg:hsgd} with $\alpha\leq \frac{1}{L}$, $\phi_{w_0}(\lambda_0)\leq r$ with $\frac{\sigma^2}{2\mu}\frac{1}{(1-\tilde{\rho})}\leq r\leq B $ and  $k\geq \log_{\rho}(\tilde{\rho})$. In addition, let $\epsilon_1\coloneqq \frac{1}{(\delta+\gamma)}(B-r)$ and 

\begin{equation}
C_{\tilde{\rho}}\coloneqq
\begin{cases}
      1 & \text{if }k\geq \log_{\rho}(\tilde{\rho})-\log_{\rho}\left( 1 + \frac{\delta+\gamma}{\varepsilon_0} \right)\\
      \frac{\tilde{\rho}-\rho^k}{\rho^k}\frac{\varepsilon_0}{(\delta+\gamma)} & \text{otherwise,}
    \end{cases} 
\end{equation}
with $\varepsilon_0\coloneqq \mathrm{E}_{\xi_{[0]}}\left[ f(w_0,\,\lambda_0) \right] - f^*(\lambda_0)$.

Under Assumptions~\ref{ass: L_smoothness}-~\ref{ass: regularity_2} and~\ref{ass: bounded_Var}-~\ref{ass: PL_condition}, if $\Vert \lambda_{i+1}-\lambda_i \Vert\leq \min\left\{e^{-\eta\,i}, \epsilon_1\right\}$ with $\eta\geq \ln{\left(C_{\tilde{\rho}}\,\tilde{\rho}\right)}$, then
\begin{equation}
\mathrm{E}_{\xi_{[i+1]}}\left[ f(w_{i+1},\,\lambda_{i+1}) \right] - f^*(\lambda_{i+1}) \leq \tilde{\rho}^{i+1}\left[ \mathrm{E}_{\xi_{[0]}}\left[  f(w_0, \,\lambda_0) \right] - f^*(\lambda_0) \right]  + \frac{\sigma^2}{2\mu}\sum_{j=0}^i \tilde{\rho}^j\,.
\end{equation}
\end{theorem}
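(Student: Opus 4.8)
The plan is to collapse the two-loop dynamics of Algorithm~\ref{alg:hsgd} into a single scalar recursion for the expected optimality gap $\phi_i \coloneqq \phi_{w_i}(\lambda_i) = \mathrm{E}_{\xi_{[i]}}\left[f(w_i,\lambda_i)\right] - f^*(\lambda_i)$, and then to unroll that recursion against the exponentially decaying schedule $\Vert\lambda_{i+1}-\lambda_i\Vert \le e^{-\eta i}$. Writing $\varepsilon_0 = \phi_0$, the claimed bound is exactly the closed form of a geometric recursion with ratio $\tilde{\rho}$, so once the recursion is in hand the result follows by induction on $i$. Two inductions run in parallel: one for the rate and one for the invariant $\phi_i \le r$ that keeps us inside the region where Theorem~\ref{theorem_SGD} applies.

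First I would establish the one-step bound. Fix a homotopy iteration and apply Theorem~\ref{theorem_SGD} to the $k$ inner SGD steps solving the problem at $\lambda_{i+1}$ from the warm start $w_i$; this is legitimate provided $\phi_{w_i}(\lambda_{i+1}) \le B$, which I obtain from the cap $\Vert\lambda_{i+1}-\lambda_i\Vert \le \epsilon_1 = (B-r)/(\delta+\gamma)$ together with $\phi_i \le r$ via the first part of Lemma~\ref{lemma:initial_conditions}. Combining the SGD estimate with the regularity bound $\phi_{w_i}(\lambda_{i+1}) \le \phi_i + (\delta+\gamma)\Vert\lambda_{i+1}-\lambda_i\Vert$ — the same triangle/Jensen computation used in the proof of Theorem~\ref{theorem_1}, built on Assumptions~\ref{ass: regularity_1} and~\ref{ass: regularity_2} — yields the scalar recursion
\begin{equation*}
\phi_{i+1} \le \rho^k \phi_i + \rho^k(\delta+\gamma)\Vert\lambda_{i+1}-\lambda_i\Vert + \frac{\sigma^2}{2\mu}.
\end{equation*}

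Next I would run the induction with hypothesis $\phi_i \le \Phi_i \coloneqq \tilde{\rho}^i\varepsilon_0 + \frac{\sigma^2}{2\mu}\sum_{j=0}^{i-1}\tilde{\rho}^j$. Since $k \ge \log_\rho(\tilde{\rho})$ and $0<\rho<1$ give $\rho^k \le \tilde{\rho}$, the pure-noise part telescopes, $\rho^k\frac{\sigma^2}{2\mu}\sum_{j=0}^{i-1}\tilde{\rho}^j + \frac{\sigma^2}{2\mu} \le \frac{\sigma^2}{2\mu}\sum_{j=0}^{i}\tilde{\rho}^j$, so after feeding the hypothesis into the recursion the inductive step reduces to the single inequality $\rho^k(\delta+\gamma)e^{-\eta i} \le (\tilde{\rho}-\rho^k)\Phi_i$. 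In words, the schedule's decay must dominate the gap $\tilde{\rho}-\rho^k$ between the per-problem rate $\rho^k$ actually delivered by $k$ SGD steps and the advertised global rate $\tilde{\rho}$, scaled by the current optimality level. Lower-bounding $\Phi_i \ge \tilde{\rho}^i\varepsilon_0$ turns this into a requirement of the form $e^{-\eta i} \le C_{\tilde{\rho}}\,\tilde{\rho}^i$, and the two branches in the definition of $C_{\tilde{\rho}}$ are exactly the two regimes — large $k$, where the slack $\tilde{\rho}-\rho^k$ already absorbs the first increment so $C_{\tilde{\rho}}=1$ suffices, versus moderate $k$, where $C_{\tilde{\rho}}=\frac{\tilde{\rho}-\rho^k}{\rho^k}\frac{\varepsilon_0}{\delta+\gamma}$ must carry it — under which the lower bound $\eta \ge \ln(C_{\tilde{\rho}}\tilde{\rho})$ closes the step.

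I expect the main obstacle to be precisely this calibration: matching the exponential schedule to the $\tilde{\rho}$-rate uniformly in $i$, with the first increment $i=0$ (where the geometric noise sum is empty and only $\varepsilon_0$ is available) being the binding case that forces the two-branch definition of $C_{\tilde{\rho}}$. A second, intertwined obligation is to preserve the invariant $\phi_i \le r$ along the whole trajectory, since that is what licenses the repeated use of Theorem~\ref{theorem_SGD} and Lemma~\ref{lemma:initial_conditions}. Here the hypothesis $r \ge \frac{\sigma^2}{2\mu}\frac{1}{1-\tilde{\rho}}$ does the work: writing $\Phi_i = F + \tilde{\rho}^i(\varepsilon_0 - F)$ with $F \coloneqq \frac{\sigma^2}{2\mu}\frac{1}{1-\tilde{\rho}}$ shows $\Phi_i \le \max\{\varepsilon_0, F\} \le r$, so the rate induction automatically maintains the tracking invariant and the two inductions can be carried out together. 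The remaining manipulations — the geometric-series identities and the monotonicity $\rho^k \le \tilde{\rho}$ — are routine.
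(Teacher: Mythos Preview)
Your proposal is correct and follows essentially the same route as the paper: derive the one-step recursion $\phi_{i+1}\le \rho^k\phi_i+\rho^k(\delta+\gamma)\Vert\lambda_{i+1}-\lambda_i\Vert+\sigma^2/2\mu$ from Theorem~\ref{theorem_SGD} plus the regularity assumptions, run an induction on $\Phi_i$, use $\rho^k\le\tilde{\rho}$ to absorb the noise sums, reduce to $\Delta\lambda_{i+1}\le \frac{\tilde{\rho}-\rho^k}{\rho^k}\frac{\varepsilon_0}{\delta+\gamma}\tilde{\rho}^i$, and verify the invariant $\phi_i\le r$ via $r\ge \frac{\sigma^2}{2\mu(1-\tilde{\rho})}$. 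Your convex-combination rewrite $\Phi_i=F+\tilde{\rho}^i(\varepsilon_0-F)$ for the invariant check is a slightly cleaner formulation of the paper's geometric-series computation, but the argument is the same.
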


\begin{proof}
We start assuming that $\phi_{w_i}(\lambda_i)\leq r$, with $0\leq r \leq B$ and $\Vert \lambda_{i+1}-\lambda_i \Vert\leq \epsilon_1$ with $\epsilon_1\coloneqq \frac{1}{(\delta+\gamma)}\left( B-r \right)$ such that $\phi_{w_i}(\lambda_{i+1})\leq B$.

In particular, we consider the following upper bound on $\mathrm{E}_{\xi_{[i+1]}}\left[ f(w_{i+1},\,\lambda_{i+1}) \right] - f^*(\lambda_{i+1})$,
\begin{equation}\label{eq:upper_bound}
\begin{aligned}
\mathrm{E}_{\xi_{[i+1]}}\left[ f(w_{i+1},\,\lambda_{i+1}) \right] - f^*(\lambda_{i+1}) &\leq \rho^k\left[\mathrm{E}_{\xi_{[i]}}\left[ f(w_{i},\,\lambda_{i}) \right] - f^*(\lambda_{i})\right]\\&\quad + \rho^k\,(\delta+\gamma)\,\Delta\lambda_{i+1} + \frac{\sigma^2}{2\mu}\,.
\end{aligned}
\end{equation} 
See the the proof of Theorem~\ref{theorem_1} for a derivation.

We then proceed by induction. Therefore, we assume 
\begin{equation}\label{eq:ass_induction}
\mathrm{E}_{\xi_{[i]}}\left[ f(w_{i},\,\lambda_{i}) \right] - f^*(\lambda_{i}) \leq \tilde{\rho}^i \left[ \mathrm{E}_{\xi_{[0]}}\left[ f(w_{0},\,\lambda_{0}) \right] - f^*(\lambda_{0})  \right] + \frac{\sigma^2}{2\mu}\sum_{j=0}^{i-1}\tilde{\rho}^j\,, 
\end{equation}
and derive the conditions on $\Delta\lambda_{i+1}$ such that
\begin{equation}
\mathrm{E}_{\xi_{[i+1]}}\left[ f(w_{i+1},\,\lambda_{i+1}) \right] - f^*(\lambda_{i+1}) \leq \tilde{\rho}^{i+1} \left[ \mathrm{E}_{\xi_{[0]}}\left[ f(w_{0},\,\lambda_{0}) \right] - f^*(\lambda_{0})  \right] + \frac{\sigma^2}{2\mu}\sum_{j=0}^{i}\tilde{\rho}^j\,. 
\end{equation}

In order to achieve that, we consider the upper bound on $\mathrm{E}_{\xi_{[i+1]}}\left[ f(w_{i+1},\,\lambda_{i+1}) \right] - f^*(\lambda_{i+1}) $ given by Inequality~\eqref{eq:upper_bound} and solve the following inequality for $\Delta\lambda_{i+1}$
\begin{equation}\label{eq:ineq_5}
\begin{aligned}
&\rho^k\left[\mathrm{E}_{\xi_{[i]}}\left[ f(w_{i},\,\lambda_{i}) \right] - f^*(\lambda_{i})\right] + \rho^k\,(\delta+\gamma)\,\Delta\lambda_{i+1} + \frac{\sigma^2}{2\mu}  \\&\leq \tilde{\rho}^{i+1} \left[ \mathrm{E}_{\xi_{[0]}}\left[ f(w_{0},\,\lambda_{0}) \right] - f^*(\lambda_{0})  \right] + \frac{\sigma^2}{2\mu}\sum_{j=0}^{i}\tilde{\rho}^j\,.
\end{aligned}
\end{equation}
We obtain that Inequality~\eqref{eq:ineq_5} is satisfied whenever
\begin{equation}\label{eq:sol_delta_lmbd}\small
\Delta\lambda_{i+1}\leq \underbrace{\frac{\tilde{\rho}^{i+1}\left[ \mathrm{E}_{\xi_{[0]}}\left[ f(w_{0},\,\lambda_{0}) \right] - f^*(\lambda_{0})  \right] - \rho^k \left[ \mathrm{E}_{\xi_{[i]}}\left[ f(w_{i},\,\lambda_{i}) \right] - f^*(\lambda_{i})  \right] + \frac{\sigma^2}{2\mu}\sum_{j=1}^{i}\tilde{\rho}^j}{\rho^k (\delta+\gamma)}}_{\text{RHS}}\,.
\end{equation}
 
We derive a lower bound on the right-had side of~\eqref{eq:sol_delta_lmbd} by considering the induction assumption, i.e. Inequality~\eqref{eq:ass_induction}, and we obtain
\begin{equation}
\text{RHS}\geq \frac{\tilde{\rho}^{i}\left(\tilde{\rho}-\rho^k \right)\left[ \mathrm{E}_{\xi_{[0]}}\left[ f(w_{0},\,\lambda_{0}) \right] - f^*(\lambda_{0})  \right] - \frac{\sigma^2}{2\mu}\rho^k\sum_{j=0}^{i-1}\tilde{\rho}^j +\frac{\sigma^2}{2\mu}\sum_{j=1}^{i}\tilde{\rho}^j}{\rho^k (\delta+\gamma)} \,.
\end{equation}
Considering that $k\geq \log_{\rho}(\tilde{\rho})$ then
\begin{equation}
- \frac{\sigma^2}{2\mu}\rho^k\sum_{j=0}^{i-1}\tilde{\rho}^j +\frac{\sigma^2}{2\mu}\sum_{j=1}^{i}\tilde{\rho}^j = \frac{\sigma^2}{2\mu}\left(1-\frac{\rho^k}{\tilde{\rho}} \right)\sum_{j=0}^{i-1} \tilde{\rho}^j \geq 0\,. 
\end{equation}
Consequently, Inequality~\eqref{eq:sol_delta_lmbd} is satisfied whenever
\begin{equation}\label{eq:final_lmbd_cond}
\Delta\lambda_{i+1}\leq \frac{\left(\tilde{\rho}-\rho^k \right)}{\rho^k}\frac{\varepsilon_0}{ (\delta+\gamma)}\tilde{\rho}^{i}\,,
\end{equation}
where $\varepsilon_0\coloneqq\mathrm{E}_{\xi_{[0]}}\left[ f(w_{0},\,\lambda_{0}) \right] - f^*(\lambda_{0}) $.

To conclude this first part of derivations, we obtain that, whenever Inequality~\eqref{eq:final_lmbd_cond} is satisfied, then
\begin{equation}\label{eq:final_ineq}
 \mathrm{E}_{\xi_{[i+1]}}\left[ f(w_{i+1},\,\lambda_{i+1}) \right] - f^*(\lambda_{i+1})   \leq \tilde{\rho}^{i+1}\left(  \mathrm{E}_{\xi_{[0]}}\left[ f(w_{0},\,\lambda_{0}) \right] - f^*(\lambda_{0})   \right)   +  \frac{\sigma^2}{2\mu}\sum_{j=0}^{i}\tilde{\rho}^j\,.
\end{equation}

In order to ensure that $\phi_{w_{i+1}}(\lambda_{i+1})\leq r$ we consider Inequality~\eqref{eq:final_ineq} and use the fact that $\phi_{w_0}(\lambda_0)\leq r$, i.e. $ \mathrm{E}_{\xi_{[0]}}\left[ f(w_{0},\,\lambda_{0}) \right] - f^*(\lambda_{0}) \leq r$, to upper bound the right-hand side of Inequality~\eqref{eq:final_ineq}. We then solve the resulting inequality for $r$
\begin{equation}\label{eq:r_ineq}
\tilde{\rho}^{i+1} r + \frac{\sigma^2}{2\mu}\sum_{j=0}^{i}\tilde{\rho}^j\leq r\,.
\end{equation}

Considering that $\sum_{j=0}^{i}\tilde{\rho}^j=\frac{\left( 1-\tilde{\rho}^{i+1} \right)}{\left(1-\tilde{\rho} \right)}$, we obtain that Inequality~\eqref{eq:r_ineq} is satisfied whenever 
\begin{equation}\label{eq:r_sol}
r\geq \frac{\sigma^2}{2\mu}\frac{1}{\left( 1-\tilde{\rho} \right)}\,.
\end{equation}
By combining Inequality~\eqref{eq:r_sol} with the fact that $r\leq B$, we obtain the following upper bound on $\tilde{\rho}$
\begin{equation}
\tilde{\rho}\leq 1-\frac{\sigma^2}{2\mu}\frac{1}{B}\,.
\end{equation}

To further simplify the bound in~\eqref{eq:final_lmbd_cond}, we define the following constant
\begin{equation}
C_{\tilde{\rho}}\coloneqq
\begin{cases}
      1 & \text{if }k\geq \log_{\rho}(\tilde{\rho})-\log_{\rho}\left( 1 + \frac{\delta+\gamma}{\varepsilon_0} \right)\\
      \frac{\tilde{\rho}-\rho^k}{\rho^k}\frac{\varepsilon_0}{(\delta+\gamma)} & \text{otherwise,}
    \end{cases} 
\end{equation}
and obtain that Inequality~\eqref{eq:final_lmbd_cond} holds whenever $\Delta\lambda_{i+1}\leq e^{-\eta\, i}$ with $\eta \geq -\ln{(C_{\tilde{\rho}}\,\tilde{\rho})}$. 
\end{proof}
\end{document}